\pgfplotsset{compat=1.18}
\newtheorem{theorem}{Theorem} 
\numberwithin{theorem}{section}
\newtheorem{lemma}{Lemma}
\numberwithin{lemma}{section}
\newtheorem{definition}{Definition}
\numberwithin{definition}{section}
\newtheorem{assumption}{Assumption}
\numberwithin{assumption}{section}
\newtheorem{remark}{Remark}
\numberwithin{remark}{section}
\newtheorem{corollary}{Corollary}
\numberwithin{corollary}{section}
\numberwithin{conjecture}{section}
\newcommand{\bA}{\mathbf{A}}
\newcommand{\bB}{\mathbf{B}}
\newcommand{\be}{\mathbf{e}}
\newcommand{\R}{\mathbb{R}}
\newcommand{\N}{\mathbb{N}}
\newcommand{\E}{\mathbb{E}}
\newcommand{\bset}[1]{\left \lbrace #1 \right \rbrace}
\newcommand{\aset}[1]{\left \langle #1 \right \rangle}
\newcommand{\Span}[1]{\mathrm{Span}\bset{#1}}
\newcommand{\D}{\displaystyle}
\newcommand{\intp}[1] {\D \int_\Omega {#1 } \; d \mathbb P}
\newcommand{\HS}{\mathrm{HS}}
\newcommand{\T}[2]{\mathcal T_\mathbf{#1}^{\mathrm{#2}}}
\newcommand{\CD}{\mathcal{D}}
\newcommand{\CH}{\mathcal{H}}
\newcommand{\CHres}{\mathcal{H}_{\text{res}}}
\newcommand{\CF}{\mathcal{F}}
\newcommand{\Mres}{M_\text{res}}
\newcommand{\bbP}{\mathbb{P}}
\newcommand{\eqdef}{\coloneqq}
\let\epsilon\relax
\newcommand{\epsilon}{\varepsilon}
\newcommand{\mcO}{{\mathcal O}}
\newcommand{\underout}[1]{\iffalse #1 \fi}
\definecolor{blueish}{rgb}{0.1176, 0.5647, 1.0000}
\definecolor{olivine}{rgb}{0.6, 0.73, 0.45}
\renewcommand{\@fnsymbol}[1]{\ifcase#1\or **\or $\dagger$ \or ♦\or ♥\else\@ctrerr\fi}
\title{FINDER: Feature Inference on Noisy Datasets using Eigenspace Residuals}
\author{%
  Trajan Murphy\thanks{Equal contributions, $^\dagger$corresponding author: adogra@nyu.edu,\\
  $^{*}$Data used in preparation of this article were obtained from the Alzheimer’s Disease Neuroimaging Initiative (ADNI) database (adni.loni.usc.edu). As such, the investigators within the ADNI contributed to the design and implementation of ADNI and/or provided data but did not participate in analysis or writing of this report. A complete listing of ADNI investigators can be found at: 
  \href{http://adni.loni.usc.edu/wp-content/uploads/how_to_apply/ADNI_Acknowledgement_List.pdf}{http://adni.loni.usc.edu/wp-content/uploads/how\_to\_apply/ADNI\_Acknowledgement\_List.pdf}}
  \\
  Department of Mathematics and Statistics\\
  Boston University\\
  Boston, MA 02215 \\
  \texttt{trajanm@bu.edu} \\
  \And
  Akshunna S. Dogra$^{**, \dagger}$ \\
  MathePhysics (India), NSF IAIFI (USA), \\
  Department of Mathematics $\qquad \qquad \quad \,$ \\
  Imperial College London, UK SW7 2AZ\\
  \texttt{adogra@nyu.edu} \\
  \And
  Hanfeng Gu\\
  Department of Earth and Environment\\
  Boston University\\
  Boston, MA 02215 \\
  \texttt{hanfengu@bu.edu}\\
  \And
  Caleb Julius Meredith\\
  Department of Mathematics and Statistics\\
  Boston University\\
  Boston, MA 02215 \\
  \texttt{cjmath@bu.edu}\\
  \And
  Mark Kon\\
  Department of Mathematics and Statistics\\
  Boston University\\
  Boston, MA 02215 \\
  \texttt{mkon@bu.edu}
  \And
  Julio Enrique Castrill\'{o}n-Cand\'{a}s$^{**}$\\
  Department of Mathematics and Statistics\\
  Boston University\\
  Boston, MA 02215 \\
  \texttt{jcandas@bu.edu}\\
  \And
  for the Alzheimer's Disease Neuroimaging Initiative$^{*}$
}
\begin{document}

\maketitle

\begin{abstract}
“Noisy'' datasets (regimes with low signal to noise ratios, small sample sizes, faulty data collection, etc) remain a key research frontier for classification methods with both theoretical and practical implications. We introduce FINDER, a rigorous framework for analyzing generic classification problems, with tailored algorithms for noisy datasets. FINDER incorporates fundamental stochastic analysis ideas into the feature learning and inference stages to optimally account for the randomness inherent to all empirical datasets. We construct “stochastic features'' by first viewing empirical datasets as realizations from an underlying random field (without assumptions on its exact distribution) and then mapping them to appropriate Hilbert spaces. The Kosambi-Karhunen-Loève expansion (KLE) breaks these stochastic features into computable irreducible components, which allow classification over noisy datasets via an eigen-decomposition: data from different classes resides in distinct regions, identified by analyzing the spectrum of the associated operators. We validate FINDER on several challenging, data-deficient scientific domains, producing state of the art breakthroughs in: (i) Alzheimer's Disease stage classification, (ii) Remote sensing detection of deforestation. We end with a discussion on when FINDER is expected to outperform existing methods, its failure modes, and other limitations. 
\end{abstract}

\section{Introduction}

Classification problems are of significant interest across a variety of scientific and commercial fields, especially when concerned with “noisy'' datasets: settings where the nominal data dimension $F$ $\gg$ than the sample size $N$, datasets with poor signal to noise ratios, etc. Deep/Machine Learning (ML) methods are particularly known to be susceptible in data-deficient settings \cite{lecun14nat, Ng_lowdata_badML_04} and thus techniques for performance improvements in these regimes remain of strong interest.

We present a multi-faceted novel development within these contexts: a generic and versatile theory for discussing classification problems with applications in treating noisy datasets, validated over a collection of challenging and significant scientific datasets. Broadly speaking, we blend standard feature inference/construction methods with the Kosambi-Karhunen-Lo\`eve theorem \cite{loeve1978} from stochastic analysis, by rigorously defining and building “stochastic features'' that can help classify the underlying structures while seeing through the inherent “blurriness'' of noisy datasets. We begin with binary classification since many classification problems reduce to a set of binary ones.

Binary classification involves classifying an input object into one of two classes, $\{\mathbf A, \mathbf B\}$, $\{0, 1\}$, etc, based on a list of numeric quantities associated with that object. For large $F$, this becomes computationally intractable as $N$ may be too limited for machine training. Principal component analysis (PCA) was developed largely as a way to effectively reduce the nominal dimension of a given dataset with minimal information loss \cite{Kokoszka2017}. Further developments involved viewing data not as a vector in $\R^F$, but as a function from a closed interval $[a,b]$ to $\R$, sampled at $F$ points. Such methods for analyzing, constructing machines from, and making predictions based on data comprise the field of functional data analysis (FDA) \cite{Kokoszka2017,Horvath2012}. FDA is often preferred when $F \gg N$: even linear discriminant analysis can be outperformed by the much simpler naive Bayesian classification \cite{naivebickel2004}.

The Kosambi-Karhunen-Lo\`eve expansion (KLE) \cite{KL_SCHWAB_06} is a fundamental result in FDA, mildly generalized by us as Thm. \ref{Generalized_KL_Expansion}. It implies that our stochastic features admit a Fourier series like expansion in terms of simpler, computable elements. We pair Thm. \ref{Generalized_KL_Expansion} with novel algorithms for constructing/finding stochastic features with in-built class separability (to the extent such underlying features can exist for the available data). We further prove that digitization does not limit the usefulness of these results and show how optimal choices for truncation may be made.

We test our approach on noisy datasets of significant scientific interest, with major improvements on existing state of the art results. FINDER is generic, but geared towards such data-deficient or otherwise noisy settings, its relative performance advantages expectedly increasing with the “noise''.

Another advantage lies in the efficiencies it may unlock for otherwise computationally intensive tasks. The inherent robustness to noise and in-built class separability implies nominally intractable datasets can become amenable even to fundamental ML algorithms like support vector machines (SVM) or hidden Markov models (HMM), which can then be used with dramatic improvements in accuracy. Hence, while FINDER comes with additional construction costs of its own, the fact that it can be packaged with simpler methods like SVM provides a pathway to manageable computational costs.

FINDER is also relatively robust to unbalanced data: it is a functional analytic schema and the number of available samples per class is a matter of concern only insomuch that the “noisiness'' of the class changes with that number. Numerical experiments validating our claims are provided in Sec. \ref{Experiment}.

\section{A Mathematical Framework for Feature Inference}\label{mathKL}
FINDER is a 3-step framework: 1) Dataset acquisition, 2) Feature construction, 3)  Classification. 

We begin by assuming that the dataset $\mathcal{D}$, a subset of some nominal space $U$, is a random realization of a complete probability space $(\Omega, \CF, \mathbb{P})$, without any assumptions on the underlying data distribution. $U$ is usually a Euclidean space, but could be a manifold, spatio-temporal domain, etc. Formally, we view this realization of the random field through a map $v_1: \Omega \to U$. We then use another map $v_2$ to map $\CD$ to some apt Hilbert space $\CH$, such that different classes get mapped to disjoint regions in $\CH$. Readers may recognize this as a standard feature construction task or kernel trick. The composition $v \eqdef v_2\circ v_1$ is called a \textbf{\textit{stochastic feature}} (and is Bochner measurable\footref{Bochner_space}), if $v \in L^2(\Omega, \CH)$.

Classifiers are then simply maps from $\CH$ to $\{0, 1\}$: usually via a machine from $\CH$ to $[0, 1]$, with a separatrix $t \in (0, 1)$. Figure \ref{Classification_via_features} summarizes FINDER as a whole, but good features make Step 3 trivial, so our focus is on the composed Steps 1 and 2: the creation and computability of stochastic features.

The novelty of our work is in that our features directly incorporate the stochasticity through which $\CD$ is generated. Thus, most binary classification problems fall within the ambit of our framework, while it becomes especially well-suited to handling noisy datasets. FINDER is agnostic to the choice of $(\Omega, \CF, \mathbb{P})$ if it is a complete probability space. $\CH = L^2([a,b])$ or $\R^F$ are usual choices if $F$ is large. 

\begin{figure}[ht]
\hspace{-0.25cm}
\begin{tikzpicture}
  \node[scale = 1.25] (cd) at (-2.5, 0) {
  \begin{tikzcd}[column sep=large, row sep=large, arrows=-latex]
    \Omega \arrow[d, "1)", swap]\arrow[d, "v_1 "] \arrow[r, dotted, "\text{FINDER}"] \arrow[rd, "v"] 
      & \{0, 1\} \\
    U \arrow[r, "2)\text{ } v_2", swap] 
      & \CH \arrow[u, "3)", swap]
  \end{tikzcd}
  };

  \node (fig1) at (1,0) {\includegraphics[width=2.2cm]{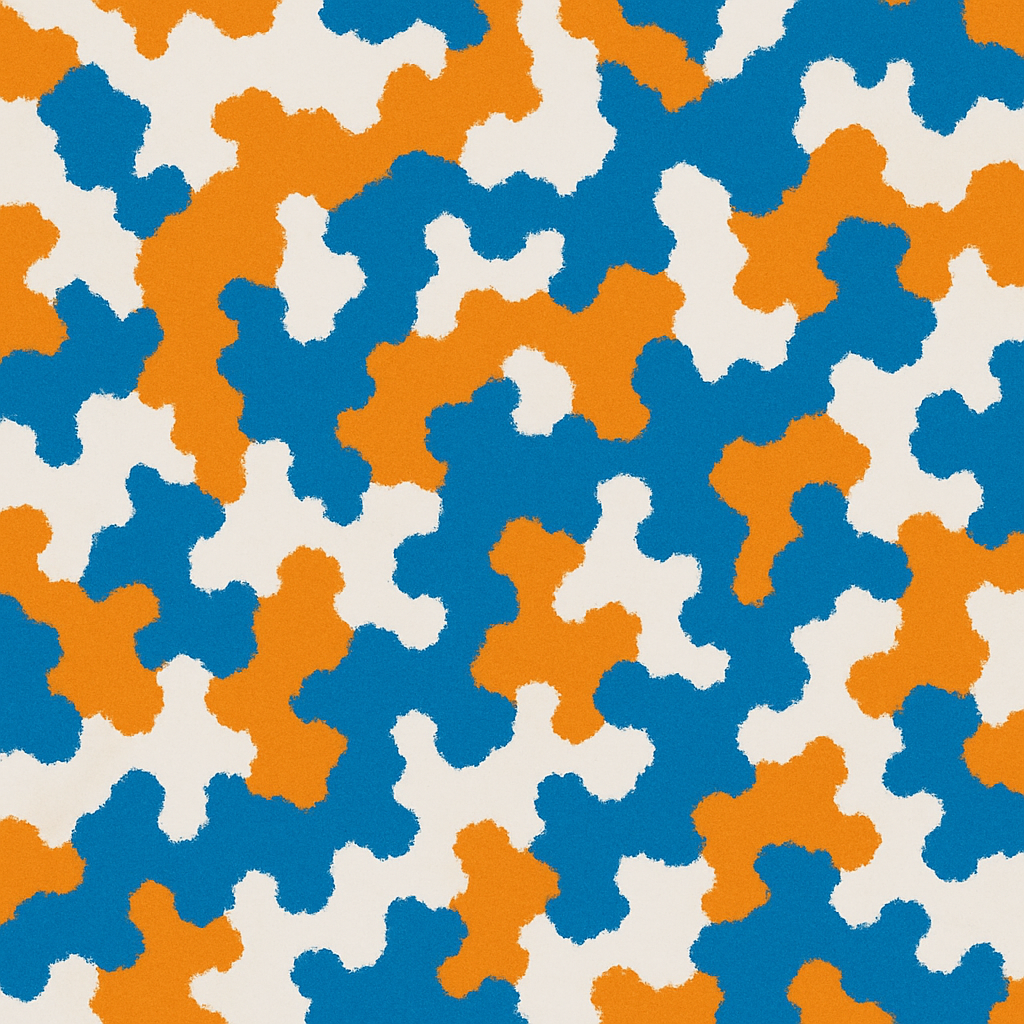}};
  \node[above=0.2cm of fig1] {Random Field};

  \node (fig2) at (3.9,0) {\includegraphics[width=2.2cm]{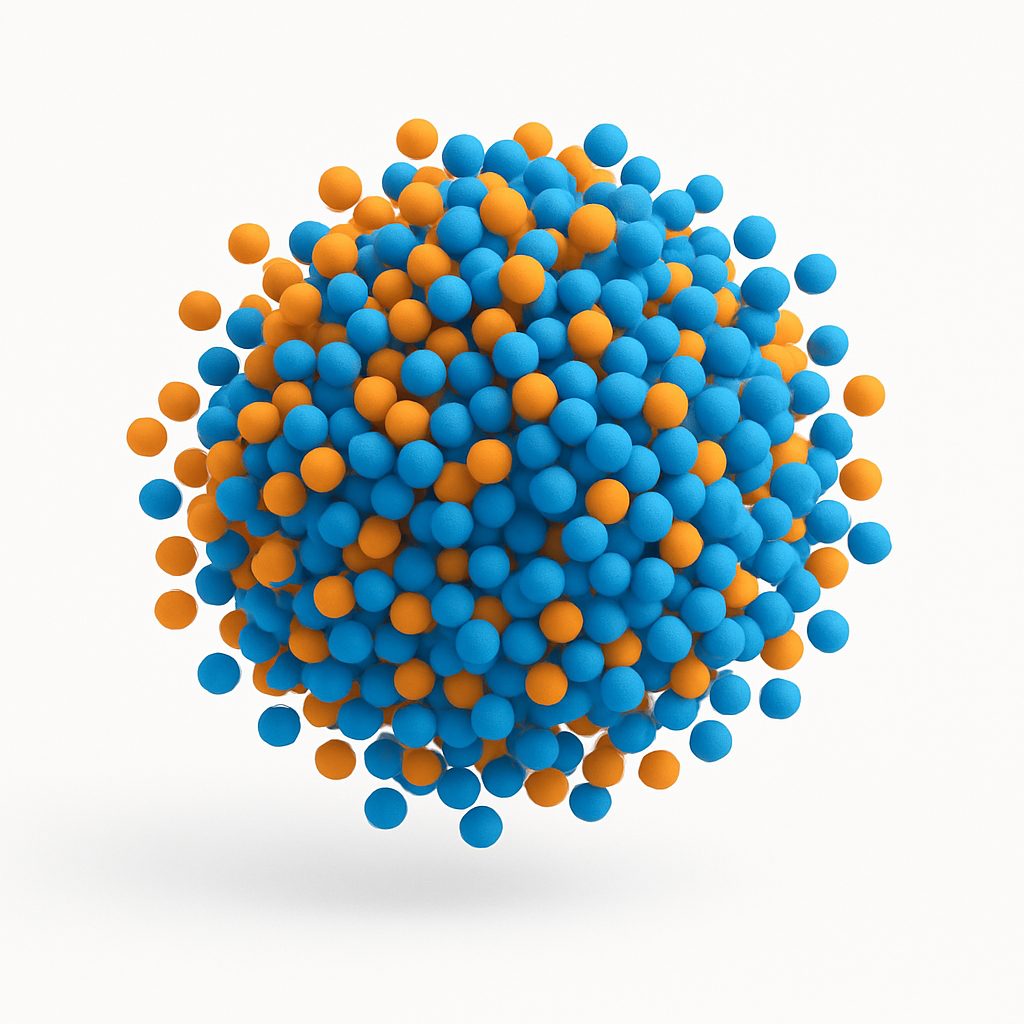}};
  \node[above=0.2cm of fig2] {Raw data};

  \node (fig3) at (6.8,0) {\includegraphics[width=2.2cm]{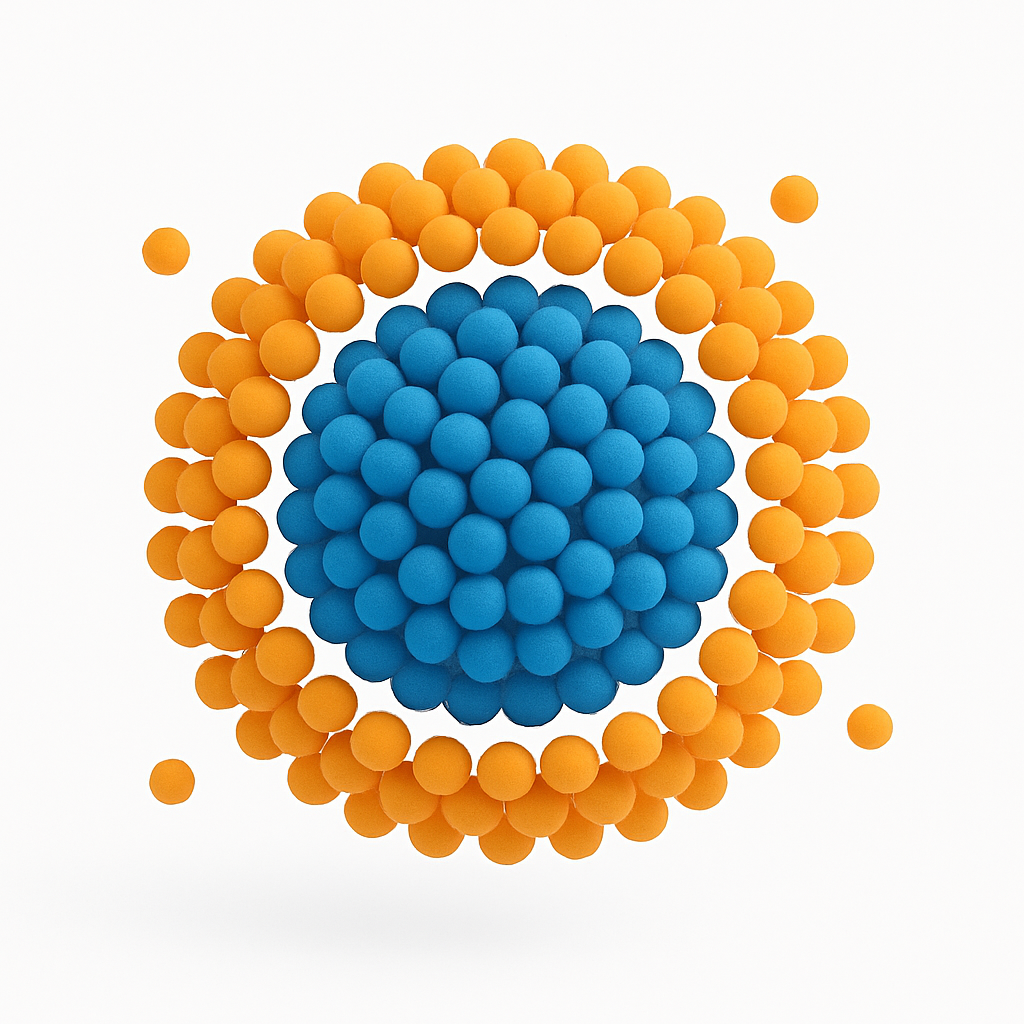}};
  \node[above=0.2cm of fig3] {Featured data};

  \node (fig4) at (9,0) {\{\textcolor{orange}{0},\textcolor{blue}{1}\}};
  \node[above=1.15cm of fig4] {Class};

  \draw[->, thick] (fig1) -- (fig2);
  \draw[->, thick] (fig2) -- (fig3);
  \draw[->, thick] (fig3) -- (fig4);
\end{tikzpicture}
\caption{A schematic for FINDER and a visual perspective on classification as a multi-stage process.}
\label{Classification_via_features}
\end{figure}

Unfortunately, constructing stochastic features $v$ can be prohibitively expensive. However, our goal is classification and the eigen-decomposition of $v$ via the KLE provides an optimally efficient short-cut.

\subsection{A generalized Kosambi-Karhunen–Lo\`eve expansion theorem}\label{KLE_thm}
We will first need to define the notions of expectation and covariance to formally discuss the KLE: \begin{definition}\label{Feature_defn}
Let $\aset{u, v}_\CH$ and $\int_\Omega \aset{u, v}_\CH d\mathbb{P}$ be the inner products on $\CH$ and $L^2(\Omega, \CH)$ respectively. \\
The Expectation Operator for $v$ is the Bochner integral: $\E: L^2(\Omega, \CH) \to \CH, \mathbb{E}(v) = \int_\Omega v(\omega) d\bbP$ . \\
The Covariance Operator is given by: $\quad \mathcal C_v: \mathcal H \to \mathcal H, \quad \mathcal C_v(e) = \E(\aset{v-\E(v),e}_\mathcal H (v - \E(v)))$.
\end{definition}
For example, if $\CH = L^2([a,b])$, $\mathcal C_v$ is the kernel operator whose kernel is $ \mathcal K({ x},{ y}) = \mathbb{E}[(v({ x}, \omega) - \mathbb{E}[v({ x}, \omega)])(v({ y}, \omega) - \mathbb{E}[v({ y}))]$ and produces a map $\mathcal C_v(f)$ s.t. $\mathcal C_v(f)(x) := \D \int_a^b \mathcal K(x,y) f(y) \; dy$.

We now state a very mildly generalized KLE (App. \ref{KLE_full_proof}), dropping the usual $\CH$ separability assumptions.
\begin{theorem} \label{Generalized_KL_Expansion}
Let $v \in L^2(\Omega, \CH)$ be Bochner measurable. Then, there exists $R \in \N \cup \aleph_0$ such that
\begin{equation} \label{KL expansion in L2ab}
v(\omega) = \E(v) + \sum_{r=1}^R \lambda_r^{1/2} Y_r(\omega) \phi_r, \qquad\qquad \lambda_r \in (0,\infty), \quad \sum_r \lambda_r < \infty, \quad \lambda_1 \geq \lambda_2 \geq ...
\end{equation}
where $\bset{Y_r}_{r=1}^R, \bset{\phi_r}_{r=1}^R$ are orthonormal sets in $L^2(\Omega), \CH$ respectively, with $\E[Y_r] = 0$ for all $r$. 
\end{theorem}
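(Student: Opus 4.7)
The plan is to invoke the spectral theorem on the covariance operator $\mathcal C_v$ after first reducing to a separable subspace of $\CH$. Writing $\tilde v \eqdef v - \E(v)$, it suffices to expand a centered Bochner-measurable element of $L^2(\Omega, \CH)$. By Pettis's measurability theorem, Bochner measurability forces essential separability: there is a closed separable subspace $\CH_0 \subseteq \CH$ with $\tilde v(\omega) \in \CH_0$ for $\bbP$-a.e.\ $\omega$. Since $\mathcal C_v(e) = \E(\aset{\tilde v, e}_\CH \tilde v)$, the range of $\mathcal C_v$ lies in $\CH_0$ and $\mathcal C_v$ vanishes on $\CH_0^\perp$, so all nontrivial spectral analysis takes place on $\CH_0$.

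Next I would verify the standard properties of $\mathcal C_v$ restricted to $\CH_0$: self-adjointness from symmetry of the inner product under $\E$; positivity via $\aset{\mathcal C_v(e), e}_\CH = \E|\aset{\tilde v, e}_\CH|^2 \geq 0$; and trace-class via
\begin{equation*}
\sum_k \aset{\mathcal C_v(e_k), e_k}_\CH = \E \sum_k |\aset{\tilde v, e_k}_\CH|^2 = \E \|\tilde v\|_\CH^2 < \infty
\end{equation*}
for any orthonormal basis $\bset{e_k}$ of $\CH_0$, with the interchange justified by Tonelli. Compactness then yields a Hilbert--Schmidt spectral decomposition: an at most countable family of strictly positive eigenvalues $\lambda_1 \geq \lambda_2 \geq \cdots > 0$ (with $R$ counting the nonzero eigenvalues, finite or $\aleph_0$) and a corresponding orthonormal family $\bset{\phi_r}$ in $\CH_0 \subset \CH$ satisfying $\sum_r \lambda_r = \E\|\tilde v\|_\CH^2 < \infty$.

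I would then define $Y_r(\omega) \eqdef \lambda_r^{-1/2} \aset{\tilde v(\omega), \phi_r}_\CH$ and read off the advertised properties: $\E[Y_r] = 0$ by commuting $\E$ with the inner product (Bochner integration), and $\E[Y_r Y_s] = \lambda_r^{-1/2}\lambda_s^{-1/2}\aset{\mathcal C_v(\phi_s), \phi_r}_\CH = \delta_{rs}$ from the eigenvalue equation $\mathcal C_v \phi_s = \lambda_s \phi_s$. For the expansion identity itself, the partial sums $S_N \eqdef \sum_{r=1}^N \lambda_r^{1/2} Y_r \phi_r$ are pointwise the orthogonal projections of $\tilde v$ onto $\mathrm{span}\bset{\phi_1,\dots,\phi_N}$, whence
\begin{equation*}
\E\|\tilde v - S_N\|_\CH^2 = \E\|\tilde v\|_\CH^2 - \sum_{r=1}^N \lambda_r \longrightarrow 0.
\end{equation*}
The residual contribution from any $\psi \in \ker \mathcal C_v \cap \CH_0$ vanishes because $\E|\aset{\tilde v, \psi}_\CH|^2 = \aset{\mathcal C_v(\psi), \psi}_\CH = 0$, confirming that only the positive-eigenvalue directions survive in $L^2(\Omega, \CH)$.

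The main obstacle I anticipate is precisely the non-separability of $\CH$: without the Pettis reduction one cannot directly apply the Hilbert--Schmidt spectral theorem, and one has to track carefully that the eigenvectors obtained inside $\CH_0$ together with the kernel directions fully account for $\tilde v$ in the ambient $L^2(\Omega, \CH)$ (not merely inside $L^2(\Omega, \CH_0)$). Once that functional-analytic bookkeeping is handled, the remainder is the classical Karhunen--Lo\`eve computation.
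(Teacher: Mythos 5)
Your proof is correct, but it is organized around a different central device than the paper's. The paper's route is to first establish the isometric isomorphism $L^2(\Omega, \CH) \cong \HS(\CH, L^2(\Omega))$ (its Lemma on Hilbert--Schmidt spaces, which is where the Pettis/essential-separability argument and a dominated-convergence step live), and then to read the expansion off as the image of the SVD $\sum_r \lambda_r^{1/2}\, Y_r \otimes \phi_r$ of the compact operator $H_{v_0}$ under that isomorphism; convergence of the series in $L^2(\Omega,\CH)$ is inherited for free from convergence of the SVD in Hilbert--Schmidt norm. You instead never build the isomorphism: you apply Pettis directly to restrict to a separable $\CH_0$, run the spectral theorem on the positive, self-adjoint, trace-class operator $\mathcal C_v = H_{v_0}^* H_{v_0}$, define $Y_r := \lambda_r^{-1/2}\aset{\tilde v, \phi_r}_\CH$ by hand (which is exactly the paper's $\lambda_r^{-1/2} H_{v_0}\phi_r$), and prove convergence via the Parseval/trace identity $\E\|\tilde v - S_N\|_\CH^2 = \E\|\tilde v\|_\CH^2 - \sum_{r=1}^N \lambda_r \to 0$. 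The two arguments produce identical objects (the paper itself notes in its covariance-operator section that $(\lambda_r,\phi_r)$ are the eigenpairs of $\mathcal C_v$ and derives $\E[Y_r]=0$ from the vanishing Pettis integral of $v - \E(v)$, just as you do). What your version buys is self-containedness and a slightly more elementary convergence proof; what the paper's version buys is that the isomorphism lemma is reused elsewhere (e.g.\ for the commuting-projection diagram in its computable-subspaces section), so the abstraction pays for itself. One small piece of bookkeeping worth making explicit in your write-up: the trace identity you invoke in the last step, $\sum_{r=1}^R \lambda_r = \E\|\tilde v\|_\CH^2$, requires noting that the zero eigenvalues of $\mathcal C_v|_{\CH_0}$ contribute nothing to the trace computed in an eigenbasis --- which is precisely your kernel remark, so the argument closes, but the two observations should be tied together rather than left as separate asides.
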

For example, if $\CH = L^2([a,b])$, then $v$ is simply a measurable map $v: [a,b] \times \Omega \to \R$, such that
\[v(x,\omega) = \E(v) + \sum_{r=1}^R \lambda_r^{1/2} Y_r(\omega) \phi_r(x), \qquad \qquad Y_r \in L^2(\Omega), \; \phi_r \in L^2([a,b]) 
\]

Thm. \ref{Generalized_KL_Expansion} justifies $L^2(\Omega, \CH)$ as the setting to source stochastic features from. We will need some intermediate results to better understand its proof and the computability of KLE. We begin by showing the isomorphism of $L^2 (\Omega, \CH)$ to the space of Hilbert-Schmidt operators $\HS(\CH, L^2(\Omega))$ in App. \ref{Equiv_spaces}:

\begin{lemma}\label{Bochner_is_tensor}
Let $(\Omega, \CF, \mathbb{P})$ be a complete probability space and $\CH$ be a Hilbert space. Then, $L^2(\Omega, \CH)$ is isometrically isomorphic to $\HS(\mathcal H, L^2(\Omega))$. 
\end{lemma}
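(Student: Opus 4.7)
My candidate for the isomorphism is the map $\Phi : L^2(\Omega, \CH) \to \HS(\CH, L^2(\Omega))$ defined for $v \in L^2(\Omega, \CH)$ and $e \in \CH$ by $(\Phi(v)e)(\omega) \eqdef \aset{v(\omega), e}_\CH$. I would prove the lemma in three stages: (i) verify $\Phi$ is well-defined, linear, and maps into $\HS(\CH, L^2(\Omega))$; (ii) prove the norm identity $\|\Phi(v)\|_{\HS} = \|v\|_{L^2(\Omega, \CH)}$, which simultaneously yields injectivity and isometry; and (iii) invert $\Phi$ via a series construction to obtain surjectivity.

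For (i)--(ii): Bochner measurability of $v$ makes $\omega \mapsto \aset{v(\omega), e}_\CH$ measurable for each fixed $e$, and Cauchy-Schwarz gives $\|\Phi(v)e\|_{L^2(\Omega)} \leq \|e\|_\CH \|v\|_{L^2(\Omega, \CH)}$, so $\Phi(v)e \in L^2(\Omega)$ and $\Phi(v)$ is a bounded operator. Picking any orthonormal basis $\bset{e_i}_{i \in I}$ of $\CH$ and invoking Tonelli (nonnegative integrand),
\[
\|\Phi(v)\|_{\HS}^2 = \sum_{i \in I} \int_\Omega |\aset{v(\omega), e_i}_\CH|^2 \, d\bbP = \int_\Omega \sum_{i \in I} |\aset{v(\omega), e_i}_\CH|^2 \, d\bbP = \int_\Omega \|v(\omega)\|_\CH^2 \, d\bbP,
\]
where the inner sum collapses to $\|v(\omega)\|_\CH^2$ via Parseval, applied inside the separable closed subspace in which almost every $v(\omega)$ lies (guaranteed by Pettis' theorem for Bochner measurable functions). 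This confirms $\Phi(v) \in \HS(\CH, L^2(\Omega))$, that $\Phi$ is isometric, and linearity is immediate from linearity of the inner product.

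The main obstacle is stage (iii). Given $T \in \HS(\CH, L^2(\Omega))$, the Hilbert-Schmidt condition $\sum_{i \in I} \|Te_i\|_{L^2(\Omega)}^2 < \infty$ forces all but countably many $Te_i$ to vanish, so $T$ factors through a separable closed subspace $\CH_0 \subset \CH$; fix an orthonormal basis $\bset{e_i}_{i=1}^\infty$ of $\CH_0$. I would construct a preimage through the partial sums $v_N(\omega) \eqdef \sum_{i=1}^N (Te_i)(\omega)\, e_i$, and show $\bset{v_N}$ is Cauchy in $L^2(\Omega, \CH)$ via
\[
\|v_M - v_N\|^2_{L^2(\Omega, \CH)} = \sum_{i=N+1}^M \|Te_i\|^2_{L^2(\Omega)} \to 0.
\]
Let $v$ be the $L^2(\Omega, \CH)$-limit; continuity of $\Phi$ together with the identity $\Phi(v_N)e_j = Te_j$ for $j \leq N$ (immediate from orthonormality of the $e_i$) yields $\Phi(v)e_j = Te_j$ on each basis vector of $\CH_0$, and bounded linearity plus the fact that $T$ and $\Phi(v)$ both vanish on $\CH_0^\perp$ extends the equality to all of $\CH$. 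The subtlety I expect to handle with most care is the potential non-separability of $\CH$: Pettis' theorem is what lets us reduce both sides of the correspondence to the same separable subspace, without which the basis computations and the $\HS$ norm formula could be ill-posed.
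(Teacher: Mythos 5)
Your proof is correct, and it establishes the same canonical identification $Xe \leftrightarrow X \otimes e$ as the paper, but it runs the correspondence in the opposite direction, which changes where the technical work lands. The paper starts from the inclusion $\iota : \HS(\CH, L^2(\Omega)) \to L^2(\Omega,\CH)$, $X \otimes e \mapsto Xe$, gets the isometry from the rank-one inner-product identity plus density of finite-rank operators in $\HS$, and then proves surjectivity by Fourier-expanding a given $v$ over an orthonormal basis of its essential (separable) range and closing the argument with dominated convergence. You instead analyze the map $v \mapsto \Phi(v) = \aset{v(\cdot), \,\cdot\,}_\CH$ directly --- which is in fact the operator $H_v$ the paper uses later for the covariance --- proving isometry for a general $v$ via Tonelli and Parseval rather than through finite-rank density, and proving surjectivity by reconstructing $v$ from a given $T$ using the countable support of $\sum_{i}\|Te_i\|^2$ and completeness of $L^2(\Omega,\CH)$. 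Both arguments hinge on the same separability reduction via Pettis' theorem, which you correctly flag as the delicate point. One small step you should make explicit: in the Tonelli interchange $\sum_{i\in I}\int_\Omega |\aset{v(\omega),e_i}_\CH|^2\, d\bbP = \intp{\|v(\omega)\|^2_\CH}$ over a possibly uncountable index set $I$, you need that $\aset{v(\omega),e_i}_\CH$ vanishes a.e.\ for all but countably many $i$; this follows because the essential range of $v$ spans a separable subspace $Z$ and only countably many $e_i$ have nonzero projection onto $Z$, but it deserves a sentence. With that addendum your route is complete and, if anything, slightly more economical than the paper's in that it constructs $H_v$ once and reuses it downstream.
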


Lemma \ref{Bochner_is_tensor} implies there is an Hilbert-Schmidt operator $H_v \in \HS(\mathcal H, L^2(\Omega))$ in correspondence with every $v$. $H_v$ is Hilbert-Schmidt means it is compact: the spectral theorem then generates our KLE directly through a generalized Singular Value Decomposition (SVD), discussed in App. \ref{KLE_proof}.

Let $H_v$ be the map $e \mapsto \aset{v - \E(v), e}_\mathcal H$. We promptly see that $\mathcal C_v = H^*_v H_v$. Thus, $\lambda_r^{1/2}, \phi_r$ are just the singular values and right singular vectors of $H_v$ (App. \ref{Cov_v}), while $ Y_r$ are the left singular vectors. 

However, we have established properties over infinite dimensional spaces that no computer can directly make use of. Fortunately, these properties pass over to the truncations we will necessarily make: instead of having to build a possibly infinite dimensional Hilbert space $\CH$ and projecting onto a subspace $\CH_M$, we can work directly on our chosen $\CH_M$ (see Sec. \ref{Computable_Subspace}). Let $P_\mathcal S$ represent projection onto some subspace $\mathcal S \subset \CH$. Lemma \ref{Optimality_of_KL} tells us the optimal $M$ dimensional subspace to work in:

\begin{lemma} \label{Optimality_of_KL}
Let $v \in L^2(\Omega, \CH)$ and $\mathcal{S} \subset \CH$ be an arbitrary $M$ dimensional subspaces. Then $\mathcal S^* = \Span{\phi_r}_{r=1}^M = \underset{\mathcal S}{\operatorname{argmin}}\|v - P_{\mathcal S}v\|_{L^2(\Omega, \CH)}$. 
\end{lemma}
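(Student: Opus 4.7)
The plan is to leverage Lemma \ref{Bochner_is_tensor} to reduce the optimization over $M$-dimensional subspaces $\mathcal{S}$ to a trace-maximization problem for the covariance operator $\mathcal{C}_v$, and then invoke the Ky Fan (equivalently Courant-Fischer) maximum principle for compact positive self-adjoint operators. Throughout I may assume $\E(v)=0$, since the mean contributes only the deterministic term $\|(I-P_{\mathcal{S}})\E(v)\|_{\CH}^2$ which does not couple to the stochastic residual; the KL framework always works with the centered field $v-\E(v)$.

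First, fix an arbitrary $M$-dimensional subspace $\mathcal{S}\subset\CH$ with orthonormal basis $\{e_1,\ldots,e_M\}$, and extend it to a countable orthonormal system $\{e_k\}_{k\geq 1}$ of the separable closed subspace generated by $\mathcal{S}$ together with the essential range of $v$ (such a separable subspace exists almost surely because $v$ is Bochner measurable). For each $\omega$ we have $P_{\mathcal{S}}v(\omega)=\sum_{k=1}^M\langle v(\omega),e_k\rangle_{\CH}\,e_k$, and Parseval's identity applied pointwise in $\omega$, combined with Tonelli, gives
\begin{equation*}
\|v-P_{\mathcal{S}}v\|_{L^2(\Omega,\CH)}^2=\sum_{k>M}\E\,|\langle v,e_k\rangle_{\CH}|^2=\|v\|_{L^2(\Omega,\CH)}^2-\sum_{k=1}^M\langle \mathcal{C}_v\,e_k,e_k\rangle_{\CH},
\end{equation*}
where the final identity uses $\E|\langle v,e\rangle|^2=\|H_v e\|_{L^2(\Omega)}^2=\langle H_v^*H_v\,e,e\rangle=\langle \mathcal{C}_v\,e,e\rangle$, with $\mathcal{C}_v=H_v^*H_v$ as established in the paragraph following Lemma \ref{Bochner_is_tensor}.

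Consequently, minimizing the left-hand side over $M$-dimensional subspaces is equivalent to maximizing $\mathrm{tr}(P_{\mathcal{S}}\mathcal{C}_v)=\sum_{k=1}^M\langle \mathcal{C}_v\,e_k,e_k\rangle$ over orthonormal $M$-tuples in $\CH$. Since $H_v$ is Hilbert-Schmidt (Lemma \ref{Bochner_is_tensor}), $\mathcal{C}_v$ is compact, self-adjoint, and positive, and Theorem \ref{Generalized_KL_Expansion} identifies its nonzero eigenpairs as $\{(\lambda_r,\phi_r)\}_{r=1}^R$. The Ky Fan maximum principle for such operators then yields $\max_{\dim\mathcal{S}=M}\mathrm{tr}(P_{\mathcal{S}}\mathcal{C}_v)=\sum_{r=1}^M\lambda_r$, attained by $\mathcal{S}^*=\mathrm{Span}\{\phi_r\}_{r=1}^M$, which is exactly the claim.

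The only real obstacle is infinite-dimensional bookkeeping: one must appeal to the spectral theorem for compact self-adjoint operators to secure the eigen-expansion Ky Fan's principle relies upon, and handle potential non-separability of $\CH$ by restricting to the separable closure of the essential range of $v$. Both are routine consequences of compactness and Bochner measurability, so once the first-step identity is in place the proof reduces to a textbook variational principle.
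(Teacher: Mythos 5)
Your proposal is correct and follows essentially the same route as the paper: both reduce the problem to the identity $\|v-P_{\mathcal S}v\|^2_{L^2(\Omega,\CH)}=\|v\|^2_{L^2(\Omega,\CH)}-\sum_{m=1}^M\aset{\mathcal C_v s_m,s_m}_\CH$ and then maximize the trace term over orthonormal $M$-tuples, which the top eigenvectors of $\mathcal C_v$ achieve. You arrive at that identity via pointwise Parseval plus Tonelli rather than the paper's idempotent/self-adjoint projection algebra, and you are more explicit than the paper in invoking the Ky Fan/Courant--Fischer principle and in flagging the centering convention $\E(v)=0$, but these are refinements of the same argument rather than a different proof.
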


We are now ready to present the class separation identities that turn these results into applications.

\subsection{Classification as featured separation}
We begin by viewing all class $\bA$ elements in $\CH$ as images of some stochastic feature $v^\bA$ and class $\bB$ elements as images of some $v^\bB$. FINDER centers the dataset on class $\bA$ by setting $v^\mathbf A \to v^\mathbf A - \E(v^\mathbf A)$ and $v^\mathbf B \to v^\mathbf B - \E(v^\mathbf A)$, using the training data to do so. The inherent assumption here is that the training set data allows an adequate estimation of $\E(v^\bA)$. Immediately:
\begin{equation}\label{classAB_KLE}
    v^\mathbf A = \sum_{r=1}^{R_\mathbf A } {\lambda_r^\mathbf A}^{1/2} Y_r^\mathbf A \phi_r^\mathbf A, \qquad \qquad \qquad
    v^\mathbf B = \E(v^\mathbf B) + \sum_{r=1}^{R_\mathbf B} {\lambda_r^\mathbf B}^{1/2} Y_r^\mathbf B \phi_r^\mathbf B 
\end{equation}

Let $\CH_{\bA} = \Span{\phi_r^\bA}_{r=1}^{M_\bA}$. In practice, we choose some finite $M_\bA \in \N$ by truncating the KLE (Lemma \ref{Optimality_of_KL}), based on the acceptable error tolerance for the problem at hand. Our goal is to construct a residual subspace ${\CHres} \subset \CH_{\bA}^\perp$ such that class $\bB$ elements present a different profile in ${\CHres}$ compared to class $\bA$: the usual choice is to have $v_\bB$ concentrate in ${\CHres}$ and/or have a different spectral profile over ${\CHres}$ than $v^\bA$ (for example, in Figure \ref{Classification_via_features}, both classes share mean, but not distribution). 

${\CHres}$ is generated from $\CH^\perp_{\bA}$ with some truncated subspace dimension $\Mres$. Intuitively, ${\CHres}$ represents the portion of $\Span{\phi^\bB_r}$ that is \textbf{not} overlapping with $\Span{\phi^\bA_r}$: in short, portions that make class $\bA, \bB$ "different" in some distributional sense (otherwise they are indistinguishable in $\CH$ anyway).

Thus, we need a formal result/algorithm that shows stochastic features that map distinct classes to distinct regions in $\CH$ have spectral profiles concentrating with computable differences within some probability. We invoke Markov's inequality to establish the following Lemma (App. \ref{Class Concentration Bounds}):
\begin{lemma}\label{Markovs_Inequality_Body}
Let $\mathcal S \subset \CH $ be a finite-dimensional subspace with orthonormal basis $\bset{s_m}_{m=1}^{M_\text{res}}$. Then

\begin{equation} \label{C_concentration bound_1}
\begin{aligned}
&    \Pr(\|P_\mathcal S(v^\bA - \E(v^\bA))\|_\mathcal H^2 \geq \varepsilon^2)  &\leq 
    \varepsilon^{-2}  \sum_{m=1}^{M_\text{res}} \sum_{r=1}^{R_\bA} \lambda_r^\bA \aset{\phi_r^\bA, s_m}_\mathcal H^2 \\
&     \Pr(\|P_\mathcal S(v^\bB - \E(v^\bB))\|_\mathcal H^2 \geq \varepsilon^2)  &\leq 
    \varepsilon^{-2}  \sum_{m=1}^{M_\text{res}} \sum_{r=1}^{R_\bB} \lambda_r^\bB \aset{\phi_r^\bB, s_m}_\mathcal H^2 \\
\end{aligned}
\end{equation}
for any $\varepsilon > 0$, where $\bset{\lambda_r^\bA, \phi_r^\bA}_{r=1}^{R_\bA}, \bset{\lambda_r^\bB, \phi_r^\bB}_{r=1}^{R_\bB}$ are the eigen-pairs of $\mathcal C_{v^\bA}$ and $\mathcal C_{v^\bB}$ respectively.
\end{lemma}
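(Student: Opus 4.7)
The plan is to apply Markov's inequality to the non-negative random variable $\|P_{\mathcal{S}}(v^\bA - \E(v^\bA))\|_{\CH}^2$ and to explicitly compute its expectation by substituting the centered KLE of $v^\bA$ furnished by Theorem \ref{Generalized_KL_Expansion}. The bound for class $\bB$ is obtained verbatim by repeating the computation with $\bA$ replaced by $\bB$.

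First I would expand the projection in the orthonormal basis $\{s_m\}$ of $\mathcal{S}$,
\[ P_{\mathcal{S}}(v^\bA - \E(v^\bA)) = \sum_{m=1}^{M_{\text{res}}} \aset{v^\bA - \E(v^\bA), s_m}_\CH\, s_m, \]
so Parseval gives
\[ \|P_{\mathcal{S}}(v^\bA - \E(v^\bA))\|_\CH^2 = \sum_{m=1}^{M_{\text{res}}} \aset{v^\bA - \E(v^\bA), s_m}_\CH^2. \]
Next I would substitute the centered KLE $v^\bA - \E(v^\bA) = \sum_r (\lambda_r^\bA)^{1/2} Y_r^\bA \phi_r^\bA$ (convergent in $L^2(\Omega, \CH)$) into each inner product to obtain the $L^2(\Omega)$-convergent scalar series
\[ \aset{v^\bA - \E(v^\bA), s_m}_\CH = \sum_{r=1}^{R_\bA} (\lambda_r^\bA)^{1/2} Y_r^\bA \aset{\phi_r^\bA, s_m}_\CH. \]
Squaring and taking expectation term by term, all cross-terms vanish because $\{Y_r^\bA\}$ is orthonormal in $L^2(\Omega)$ (so $\E[Y_r^\bA Y_{r'}^\bA] = \delta_{rr'}$), leaving
\[ \E\bigl[\aset{v^\bA - \E(v^\bA), s_m}_\CH^2\bigr] = \sum_{r=1}^{R_\bA} \lambda_r^\bA \aset{\phi_r^\bA, s_m}_\CH^2. \]
Summing over $m$ and applying Markov's inequality $\Pr(X \geq \varepsilon^2) \leq \varepsilon^{-2}\E[X]$ to $X = \|P_{\mathcal{S}}(v^\bA - \E(v^\bA))\|_\CH^2$ delivers the stated bound.

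The only genuine subtlety is the justification for swapping expectation with the (possibly infinite) sum inside the squared inner product when $R_\bA = \aleph_0$. This follows from the $L^2(\Omega, \CH)$-convergence of the KLE together with $\sum_r \lambda_r^\bA < \infty$: the partial sums of $\aset{v^\bA - \E(v^\bA), s_m}_\CH$ converge in $L^2(\Omega)$, so their squares converge in $L^1(\Omega)$, and the limiting diagonal expression is dominated by $\sum_r \lambda_r^\bA < \infty$. Equivalently, the swap is baked into the Hilbert-Schmidt identification of Lemma \ref{Bochner_is_tensor}, under which $H_v$ has exactly the singular structure one needs to read off $\E\|P_{\mathcal S}(v - \E v)\|^2 = \sum_{m,r} \lambda_r \aset{\phi_r, s_m}^2$. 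Everything else is routine.
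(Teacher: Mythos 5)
Your proof is correct and follows essentially the same route as the paper: both apply Markov's inequality to $\|P_{\mathcal S}(v-\E(v))\|_\CH^2$ after showing its expectation equals $\sum_{m}\sum_{r}\lambda_r\aset{\phi_r,s_m}_\CH^2$. The only (immaterial) difference is how that expectation is evaluated — the paper writes it as $\sum_m \aset{\mathcal C_v s_m, s_m}_\CH$ and expands the covariance operator spectrally, whereas you substitute the KLE directly and use orthonormality of the $Y_r$ in $L^2(\Omega)$, with the interchange of limit and expectation justified exactly as you describe.
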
 

 Lemma \ref{Markovs_Inequality_Body} suggests that  if both RHS of \eqref{C_concentration bound_1} are small for $\mathcal S \subseteq \CH$, then mapped stochastic features (i.e. $P_\mathcal S v^\bA$ or $P_\mathcal S v^\bB$) concentrate around their respective mapped class expectation with high probability. 
 
 \begin{remark}
    The value of $\lambda_r$ is simply the variance of the real-valued random variable $\aset{v, \phi_r}_\CH$. Because the KLE acts as a generalized SVD, the eigen-pairs $(\lambda_r, \phi_r)$ capture the subspace in which $v$ tends to concentrate in, along with the spread of $v$ within that subspace. This can be used, as in the Markov bound \eqref{C_concentration bound_1} above, to place deterministic bounds on the probability that $v$ lies in certain regions of $\CH$ without knowing the underlying distribution of $v$. For many datasets, only the first few eigenvalues $\lambda_r^\bA$ are non-negligible. Thus, if $\bset{s_m}_{m=1}^{M_\text{res}}$ is an orthonormal set in $ \CH_\bA^\perp$, then $\D \sum_{m=1}^{M_\text{res}} \sum_{r=1}^{R_\bA} \lambda_r^{\bA} \aset{\phi_r^\bA, s_m}^2 \leq \sum_{r > M_\bA} \lambda_r^\bA$. This is the one of the rationales behind the three FINDER variants we will present later. 
 \end{remark}
 
The Markov bounds hold regardless of the distribution of the stochastic components (the sequence $Y_r$ in the KLE) of $v$. $Y_r$ having 0 expectation essentially filters them out of the inequality. This distribution-agnostic aspect of FINDER serves to reduce computational complexity by eliminating the need to estimate the $Y_r$, a process which can be expensive in the absence of sufficient data and one that may need additional, potentially unrealistic, assumptions on the probability space $(\Omega, \CF, \mathbb{P})$. 

However, the saved costs come with a large disadvantage: $\ref{C_concentration bound_1}$ is rarely a tight bound. So while we may not need assumptions on $(\Omega, \CF, \bbP)$, better understanding of $Y_r$ can get more useful bounds than \eqref{C_concentration bound_1}.

\subsection{Constructing Residual Eigenspaces and Implementations}
FINDER comes with inherent flexibility of implementation since we may tile  ${\CHres}$ in a variety of ways (we may even nonlinearly parameterize ${\CHres}$ using deep neural networks). However, in this work we use the following linearly parameterized approaches (complexity analysis in Sec. \ref{complexity_analysis}):

\begin{itemize}[left=1em]
    \item Direct Residual Subspaces: Initially proposed in \cite{Lakhina2004} in PCA contexts. We simply extend it to the KLE setting by taking $\CHres = \CH_\bA^\perp = (\Span{\phi_r^\bA}_{r=1}^{M_\bA})^\perp$. 
    
    \item Multi-Level Subspaces (MLS): This approach adapts the algorithm in \cite{Tausch2003} for constructing a basis for the residual subspace. Class $\bA$ and $\bB$ are then projected onto this subspace and used to train the classifier. It is detailed in Sec. \ref{FINDER_MLS}.

    \item Anomalous Class Adapted (ACA-S and ACA-L): Results in Sec. \ref{KLE_thm}, \ref{C_concentration bound_1} lead to two novel, related methods, each relying on two successive projections to class $\bA$ and $\bB$ samples, as described in Alg. \ref{ACA pseudocode} and detailed in Section \ref{Anomalous Class Adapted Spaces}.
    
    \begin{algorithm}
    \caption{ACA Algorithm for Residual Eigenspace Construction} \label{ACA pseudocode}
    \begin{algorithmic}
    \item 
       \textbf{Inputs:} $M_\mathbf A \in \bset{1,\dots, \dim(\CH)}$, $M_\text{res} \in \bset{1, \dots, \dim(\CH) - M_\bA}$
       
         \State $v_i \gets P_{\CH_\bA^\perp} v_i$

        \If{ACA-S} 
        \State $\CHres = \text{argmin} \bset{\|P_\mathcal S (v^\bB - \E(v^\bB))\|_{L^2(\Omega, \CH)}^2}$ s.t. $  \mathcal S \leq  \CH_\bA^\perp, \;\dim(\mathcal S) = M_\text{res}  $
       
       \ElsIf{ACA-L}
          \State $\CHres = \text{argmax} \bset{\|P_\mathcal S (v^\bB - \E(v^\bB))\|_{L^2(\Omega, \CH)}^2}$ s.t. $  \mathcal S \leq  \CH_\bA^\perp, \;\dim(\mathcal S) = M_\text{res}  $
       
       \EndIf

        \State $v_i \gets P_{\CHres} v_i$
    \begin{center}
\begin{tikzpicture}[
  node distance=2cm,
  every node/.style={align=center},
  myarrow/.style={-Stealth, thick}
]

\node (raw) [draw, rectangle, rounded corners, minimum width=3cm, minimum height=1cm] 
    {Raw features: $v^\bA, v^\bB$ \\ Choose $M_\bA$};

\node (chooseM) [draw, rectangle, rounded corners, right=of raw, minimum width=3cm, minimum height=1cm] 
    {Choose $ M_\text{res}$};

\node (final) [draw, rectangle, rounded corners, right=of chooseM, minimum width=3cm, minimum height=1cm] 
    {Transformed \\ Features };

\draw[myarrow] (raw) -- node[above] {$v \to P_{\CH_\bA^\perp} v$} (chooseM);
\draw[myarrow] (chooseM) -- node[above] {$v \to P_{\CHres} v$} (final);

\end{tikzpicture}
\end{center}

    \end{algorithmic}
    \end{algorithm} 
\end{itemize}

FINDER variants are tested by training on both unbalanced and balanced datasets (SVMs will be our usual choice for conventional feature construction, but we will occasionally leverage HMMs too). For Balanced, $N_\bA - N_\bB - 1$ samples of Class $\bA$ are used to estimate $\mathcal C_{v^\bA}$ and the remaining $N_\bB - 1$ Class $\bA$ samples and $N_\bB - 1$ Class $\bB$ samples are used to train the SVM. For Unbalanced, all $N_\bA - 1$ Class $\bA$ samples are used to estimate $\mathcal C_{v^\bA}$ and all $N_\bA - 1$ Class $\bA$ samples and all $N_\bB - 1$ Class $\bB$ samples are used to train the SVM. In both the Balanced and Unbalanced regime, $\mathcal C_{v^\bB}$ is estimated using all $N_\bB - 1$ Class $\bB$ samples. See Section \ref{Estimates} and Figure \ref{Data_Balancing_Figure} for a detailed description.

Data used in the preparation of this article were obtained from the Alzheimer’s Disease Neuroimaging
Initiative (ADNI) database (\href{http://adni.loni.usc.edu}{adni.loni.usc.edu}). The ADNI was launched in 2003 as a public-private partnership, led by Principal Investigator Michael W. Weiner, MD. The primary goal of ADNI has been to test whether serial magnetic resonance imaging (MRI), positron emission tomography (PET), other biological markers, and clinical and neuropsychological assessment can be combined to measure the progression of mild cognitive impairment (MCI) and early Alzheimer’s disease (AD).

\section{Applications and Numerical Experiments}\label{Experiment}

We exemplify FINDER on noisy datasets sourced from a variety of fields (proteomic, genomic, satellite, etc): extended descriptions are given in App. \ref{experiment_appendix}. These are picked for their scientific significance and noted resistance to standard classification methods. Performance on each problem will be compared against the current state of the art results. We pair FINDER with simple ML methods, allowing us to test the two-fold claims we made regarding its robustness to noise and its capacity for making complex datasets more amenable to simpler ML methods. To assess classification performance, we performed leave-(one)-pair-out cross-validation (LPOCV) on standardized data.

For comparative purposes, our benchmark methods will usually comprise of 1) a linear SVM, 2) SVM with RBF (SVM Radial), 3) LogitBoost, 4) RUSBoost, and 5) random forest (BAG) trained on raw features. We present and discuss two regimes where FINDER produced significant breakthroughs, while App. \ref{experiment_appendix} considers its impact and limitations across a wider variety of settings and applications. The codebase is made available at \cite{FINDER_Codebase}.

\subsection{AD classification from blood plasma protein data}
\label{section:AD}

Our first suite of tests employs proteomics data from the Alzheimer’s Disease Neuroimaging Initiative (ADNI) \cite{Petersen2010}. The features correspond to 146 blood plasma biomarkers. The final cohort distribution includes 54 Cognitive Normal (CN), 96 Alzheimer's Disease (AD), and a mix of 346 Mild Cognitive Impairment (MCI) to Late MCI participants, for simplicity referred to as LMCI.

Early and accurate AD state classification is critical for the tens of millions of people suffering from or at risk of AD, particularly since early detection significantly improves prognosis. High accuracy techniques to distinguish between CN and LMCI with minimally invasive methods like blood tests are a prominent area of research, while tests for CN vs Early MCI would be even more significant.

In our experiments, we have $U = \CH = \CH_M = \R^{146}$. Furthermore, ${\CHres} \leq \CH_{\bA}^{\perp}$, where $\dim(\CH_\bA) = M_{\bA} = 5$. Table \ref{ADNI Table} summarizes the best AUC obtained on the ADNI dataset by various FINDER variants and the best AUC performance by the entire set of benchmark learners.

 \begin{table}[t!] 
 \caption{Maximum AUC achieved for ADNI cohorts using MLS, ACA-S, and ACA-L variants across all values of $M_\text{res}$, compared against the best benchmark (See Figure \ref{ADNI Graphs} for the best performing benchmark within each cohort). We also report the time needed to perform both the feature transformations and train the classifier for a single round of LPOCV. The AUCs reported across the different regimes within MLS and AUC demonstrate the sensitivity of different data sets to the type of SVM separating boundary. The difference in reported AUC for the ACA-S vs. ACA-L regimes also highlight the sensitivity of this dataset to the choice of residual subspace. Both the MLS and ACA methods are capable of elevating the AUC obtained by LogitBoost and SVM with linear separating boundary while also achieving a significant reduction in overall run time.} 
\label{ADNI Table} 
\centering
\begin{tabular}{c >{\centering\arraybackslash}p{1.15cm} 
>{\centering\arraybackslash}p{1.15cm} 
>{\centering\arraybackslash}p{1.15cm} 
>{\centering\arraybackslash}p{1.15cm} 
>{\centering\arraybackslash}p{1.15cm} 
>{\centering\arraybackslash}p{1.15cm}
>{\centering\arraybackslash}p{1.45cm}
}\hline 
\rowcolor{olive!40} 
\multicolumn{8}{|c|}{Balanced} \\ 
\hline 

\rowcolor{gray!40} 
 Regime  & MLS & MLS & ACA-S & ACA-S & ACA-L & ACA-L & Best\\
 \rowcolor{gray!40} 
 SVM & Linear & RBF & Linear & RBF & Linear & RBF & Benchmark\\ 
 \hline \hline 
 
\rowcolor{blue!20} 
AD vs. CN & 0.838 & \textbf{0.894} & 0.821 & 0.883 & 0.782 & 0.864 & 0.789\\ 
Time (ms) & 116.1 & 130.3 & \textbf{70.6} & 87.8 & 74.0 & 78.7 & 1173.42\\ 

\rowcolor{blue!20} 
AD vs. LMCI & 0.750 & \textbf{0.886} & 0.863 & 0.863 & 0.647 & 0.860 & 0.790\\ 
Time (ms) & 212.1 & 287.0 & \textbf{81.8} & 109.4 & 85.8 & 112.1 & 2302.33\\ 

\rowcolor{blue!20} 
CN vs. LMCI & 0.923 & 0.937 & \textbf{0.970} & 0.968 & 0.830 & 0.909 & 0.910\\ 
Time (ms) & 178.2 & 247.5 & 80.1 & 93.8 & \textbf{79.7} & 89.1 & 246.69
\\ 

\hline \hline 
\rowcolor{olive!40} 
\multicolumn{8}{|c|}{Unbalanced} \\ 
\hline 

\rowcolor{gray!40} 
 Regime  & MLS & MLS & ACA-S & ACA-S & ACA-L & ACA-L & Best \\
 \rowcolor{gray!40} 
SVM & Linear & RBF & Linear & RBF & Linear & RBF & Benchmark \\ 
 \hline \hline 
 
\rowcolor{blue!20} 
AD vs. CN & 0.865 & 0.910 & 0.875 & 0.912 & 0.864 & \textbf{0.913} & 0.789\\ 
Time (ms) & 114.6 & 129.5 & \textbf{76.2} & 93.0 & 77.0 & 81.9 & 1173.42\\ 

\rowcolor{blue!20} 
AD vs. LMCI & 0.743 & 0.883 & 0.860 & \textbf{0.889} & 0.743 & 0.883 & 0.790\\ 
Time (ms) & 212.6 & 289.8 & \textbf{115.3} & 179.1 & 121.8 & 178.1 & 2302.33\\ 

\rowcolor{blue!20} 
CN vs. LMCI & 0.927 & 0.938 & 0.955 & \textbf{0.959} & 0.928 & 0.938 & 0.910\\ 
Time (ms) & 175.3 & 237.9 & \textbf{91.4} & 135.9 & 94.6 & 146.2 & 246.69\\ 

\hline \hline

\end{tabular} 
\end{table}

We note that the AUC results we obtain with ACA and MLS are significantly higher under FINDER than the benchmark approach. At the same time, the results in Table \ref{ADNI Table} demonstrate that the run time under FINDER also improves significantly in all three cohorts. This efficiency is especially remarkable in the AD vs. CN and AD vs. LMCI cohorts, for which LogitBoost obtains the best AUC among the benchmark learners (see Figure \ref{ADNI Graphs}). 

In addition to the AUC, Section \ref{experiment_appendix} reports the accuracy obtained across all three methods. In binary classification problems, one metric is often insufficient to substantiate the classification capacity of a given method. Indeed, AUC can be sensitive to the interpolation method between thresholds \cite{Muschelli_2019}. Furthermore, datasets with $N_\bA \gg N_\bB$ can yield a high AUC but a low accuracy. 

This is exemplified in the CN vs. LMCI cohort (see Table \ref{AUC for BMP}). For this cohort, we have $N_\bA = 346$ and $N_\bB = 54$. An SVM with linear separating hypersurface obtains an AUC of 0.91 and an accuracy of only 0.71 on this dataset. With this in mind, we report that FINDER is also capable of elevating the accuracy of SVM classification compared to the benchmark learners on all three ADNI cohorts (see Table \ref{accuracy for FD} and Figure \ref{ADNI_Accuracy_Data}).

Moreover, in \cite{Rehman2024} the authors propose several models that include a subset of the ADNI blood plasma proteins plus other features such as age, sex, education and the APOE4 gene. Although we used all of the proteomic blood plasma features, our results  are significantly higher than those presented in Figure 2 in that paper.

\begin{figure}[ht]
    \includegraphics[width=0.95\linewidth]{./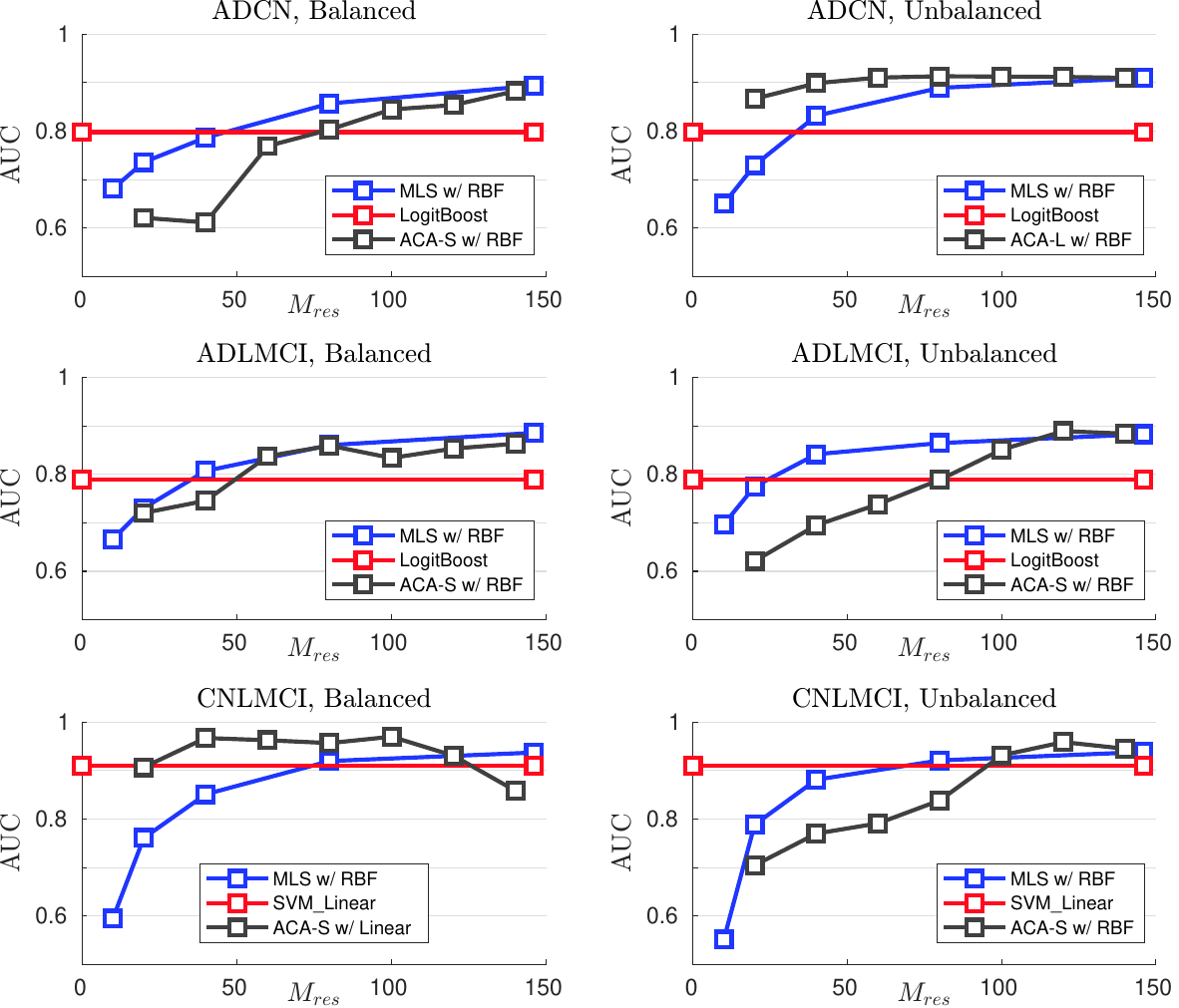}
    \caption{AUC obtained across all three methods for each of the three ADNI cohorts. Within each method (MLS, ACA, benchmark), the regime with the highest overall AUC obtained across all tested values of $M_\text{res}$ is reported. AUC can improve significantly from the benchmark level when both FINDER methods are employed with an RBF separating boundary. While the MLS method remains robust with respect to the choice to pre-balance or not pre-balanced the data, the ACA method is highly dependent on this choice. For the AD vs. CN cohort, the Unbalanced regime within ACA performs consistently better than benchmark and MLS. However, for the CN vs. LMCI cohort, the Balanced regime within ACA performs consistently better than benchmark and MLS.
    The data also demonstrate that the performance of FINDER is sensitive to the choice of $M_\text{res}$. The overall trend appears to be that larger $M_\text{res}$ achieve higher AUCs, though too large $M_\text{res}$ can diminish AUCs.}
    \label{ADNI Graphs}
\end{figure}

\subsection{Deforestation Detection via Radar and Optical Remote Sensing}
\label{section:remote}

\begin{figure}[ht]
    \centering
    \includegraphics[width=0.48\linewidth]{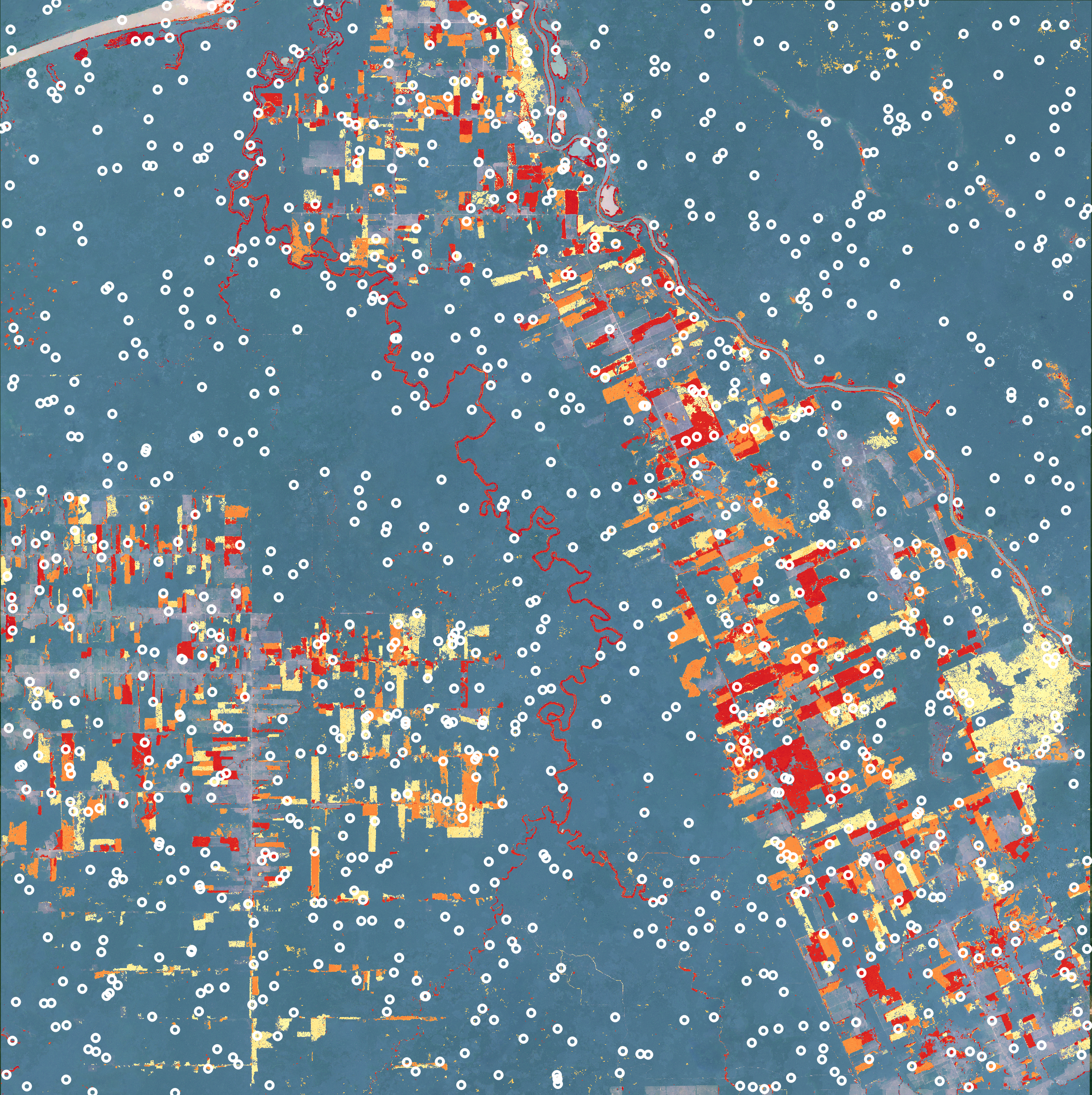}
    \includegraphics[width=0.51\linewidth]{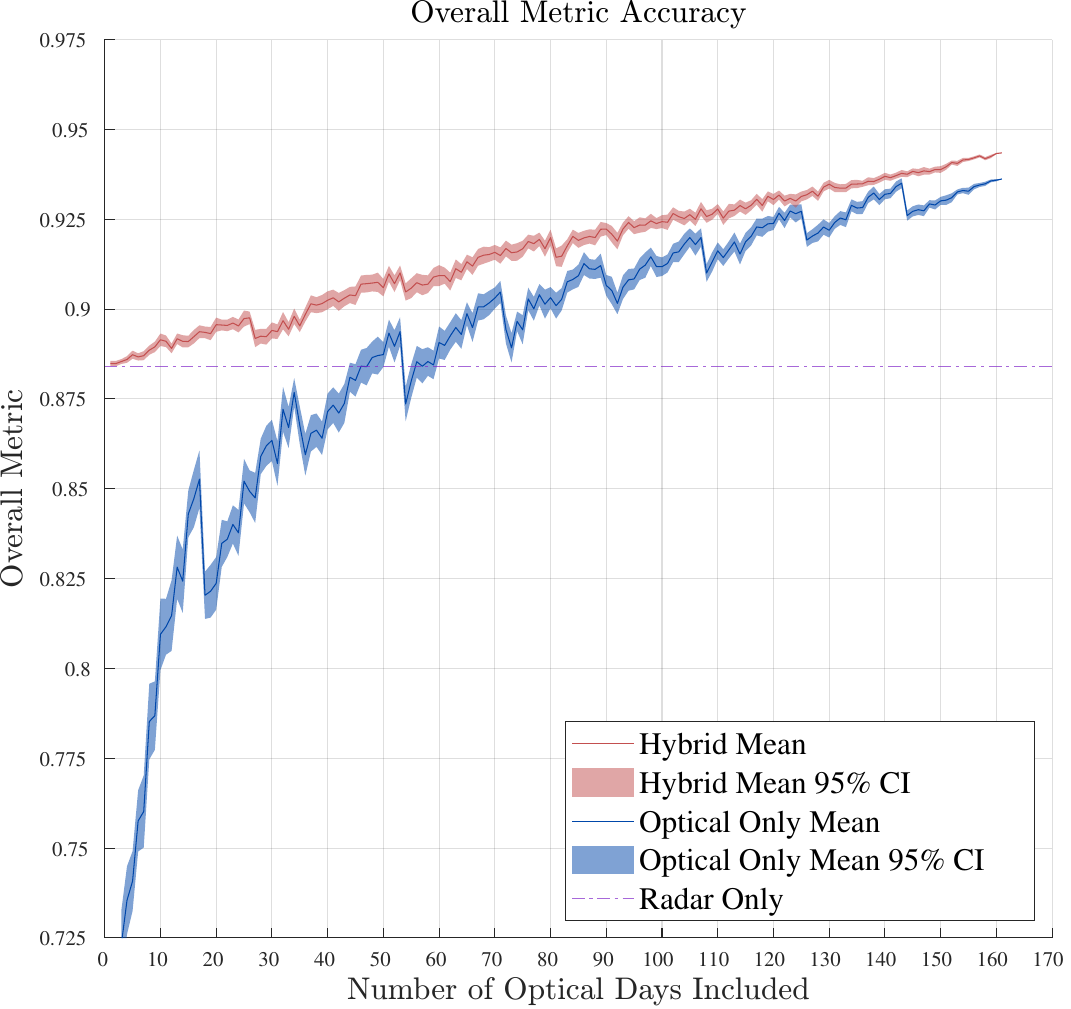}\\
    (a) \hspace{6.5cm} (b)
    \caption{(a) Test region in the Amazon forest and validation samples. 1000 samples of the validation regions are selected. The region is formed by $9,219 \times 9,180$ pixels, each pixel a $10 m \times 10m$ patch of land,  representing the Enhanced Vegetation Index. The colored areas indicate the detection of deforestation with the Hybrid FINDER+HMM method by December 31 2022. (b) Overall metric accuracy with Hybrid and Optical only vs the number of available optical Sentinel-2 days. }
    \label{Remote:FigDetection}
\end{figure}

For a second assessment of FINDER and to test its versatility, we applied the direct residual method to remote sensing/detection of deforestation using optical (Sentinel-2 \cite{Drusch2012}) and Synthetic Aperture Radar (SAR) (Sentinel-1 \cite{Torres2012}) data.  We note that a similar approach was proposed  in \cite{Lakhina2004} in the context of anomaly detection for network traffic and FINDER is the basis for a remote sensing deforestation detection framework in \cite{Castrillon2025b}.

These two datasets require us to consider two types of noise.  SAR data contains noise due to the relatively weak sensors, but is unaffected by cloud cover. Optical data comes from higher quality and higher resolution sensors, but clouds can obstruct or completely block the ground, significantly reducing the amount of usable data. These datasets are critical in detecting deforestation, illegal logging, and quantifying the loss of carbon absorption in the atmosphere \cite{global2016integration}.

Our hybrid FINDER approach filters the SAR data, applies the direct residual method to the optical data, and combines the processed data in a complementary way that is very effective for highly cloudy regions such as the West African coast, Madagascar, and parts of the Amazon forest and Southeast Asia \cite{Tang2023,Yingtong2022}. In particular, it significantly out-performs state-of-the-art methods such as Fusion Near Real Time (FNRT) \cite{Tang2023}. This is made clear by the results in Table \ref{results:table}, where FINDER compares well against FNRT while using roughly 45 percent less data when FNRT can be used, and retains a fair proportion of its performance in the data scarce environments where FNRT is not applicable/usable.

We choose a test region of approximately 92 km $\times$ 92 km (corresponding to $9219 \times 9180$ pixels) in the Amazon forest.  The Sentinel-2 optical bands are converted to a scalar valued Enhanced Vegetation Index (EVI) \cite{Huete2002} measurement on the terrain. This is a common measure in remote sensing to detect vegetation on land cover and can be used to detect loss of forest vegetation. Each optical pixel has a resolution of $10\,m \times 10 \,m$. The optical Sentinel radar is resampled to same resolution as the optical EVI data and then filtered using a spatio-temporal Bayesian approach.

Using a simplified version of the residual eigenspace FINDER approach, an anomaly map is built from optical data that are fused with SAR with a Hidden Markov Model (HMM). The HMM then classifies the pixel as without change (forest), deforestation or cloud cover. For this experiment, the FINDER spaces and parameters will be $\CH = \CH_M = \Span{\be_k}_{k = 1,\dots, 9219 \times 9180} $, where $e_{k}$ is a unit vector such that $\be_i \cdot \be_j = \delta[i-j]$, and  $\CHres = \CH_{\bA}^{\perp}$ where $M_{\bA} = 29$.

The performance of the algorithms is measured by selecting 1000 validation data points in the test region (See Figure \ref{Remote:FigDetection}(a)). Of these, 740 are stable forest with no change and in 260 the forest has been removed. The final state of the forest (stable or deforestation) at the end date is visually checked, and this is our ground truth.

The remote sensing community relies on three kinds of metrics to quantify detection algorithm performance: i) Overall accuracy refers to the percentage of correctly classified pixels (stable or deforestation) in the validation test area.
ii) User's accuracy represents the probability that a pixel classified as a particular class (stable forest or deforestation) in a map is actually the correct validation state. 
iii) Producer's accuracy measures how well a map maker correctly identifies areas on the ground that belong to a specific class. Specific formulae from the remote sensing community can be found in \cite{Olofsson2014}.

In Figure \ref{Remote:FigDetection}(b) the overall accuracy of the hybrid and optical-only methods is simulated for all possible numbers of available optical days. More specifically, for all $n \leq 161$ days we randomly choose 100 sets of $n$ Sentinel-2 EVI images and compute the hybrid and optical-only accuracies using the same sets, so as to get a direct comparison. The plot shows the mean and corresponding 95\% confidence intervals from those 100 accuracies for each number of optical days. As this number decreases we clearly see that the optical-only accuracy reduces significantly. In contrast the hybrid method is robust to loss of optical data, limiting to the result from only using SAR data.

The remote sensing community has seen a recent surge in interest in combining optical and radar data to detect deforestation \cite{Chen2023}. 
Table \ref{results:table} summarizes the comparisons between the current state-of-the-art optical and radar hybrid approach and FINDER.  The accuracy results presented are with common postprocessing remote sensing methods \cite{Chen2023} that give slightly higher accuracies for all the methods. We observe that FINDER achieves high accuracies for hybrid optical and radar data.  For 71 days of optical training day an accuracy of 0.942 is achieved. If we reduce the training days to 35 the accuracy reduces only slightly to 0.933. In contrast, for the same training period with 71 days the performance of FNRT is poor.  However, if we increase the training period for FNRT to 130 days (Jan 2018 to March  2020) the accuracy increases to 0.935.

Thus, we identify a strong use-case for FINDER here. Figure \ref{Remote:FigDetection} and Table \ref{results:table} show how well our method works under lack of optical data. Moreover, it can be used to track forest loss in almost real time. Many regions of the world at the risk of deforestation, such as large sections of the Amazons, coastal West Africa, Southeast Asia \cite{Tang2023,Yingtong2022}, etc, are too cloudy to supply FNRT methods with the data they need. In some cases, it can be years before sufficient data is assembled. Beyond the opportunity costs of not being able to act in time, this can also be a problem as deforestation during the data collection period can itself affect the performance of the algorithm.

\begin{remark}
The necessity of applying FINDER to the optical data is discussed in Section \ref{remote_appendix}.
\end{remark}

\begin{table}[htbp]
        \caption{Accuracy results after postprocessing. Sentinel-2 optical data with HMM + FINDER has the same accuracy as joint optical and SAR data with the HMM + Finder. For the joint optical + SAR data the user accuracy is superior and the producer's accuracy is almost the same. Note that the state-of-the-art FNRT performs poorly when not enough optical training days exist, but improves significantly with more training data. Note that the timings for FNRT are 368 Google engine 
        EECU-hours, which corresponds $\approx 3$ or $4$ wall hours.  For the HMM + FINDER results the code was run using two 14-core 2.4 GHz Intel Xeon E5-2680v4 CPUs.
}
        \label{results:table}
    \centering 
    \begin{tabular}{>{\centering\arraybackslash}p{4cm} 
    c c c c c}
    \toprule
\rowcolor{olive!40}  Algorithm (Data) & Train Days & Overall Acc. & User & Producer & Comp Time (h) \\
\midrule
\rowcolor{blue!20} 
FNRT (Hybrid) & 71 & 0.260 & 0.260 & \text{ } 1.00$^\text{NA}$ &  $368^{\#}$ \\
FNRT (Hybrid) & 130 & 0.935 & \textbf{0.892} &  0.707 & $368^{\#}$ \\
\rowcolor{blue!20} 
HMM + FINDER  (Optical) & 71  & 0.936 & 0.801 & 0.748 &  \textbf{13.95} \\
HMM + FINDER (Hybrid) & \textbf{35}     & 0.933 & 0.839 & 0.718 & 49.34\\
\rowcolor{blue!20} 
HMM + FINDER (Hybrid) & 71     & \textbf{0.942} & 0.865 & \textbf{0.752} & 49.47\\
\bottomrule
    \end{tabular}
\end{table}

\section{Limitations and Conclusion}\label{limits_conclusion}
FINDER is generic, versatile, robust in noisy regimes, and blends well with conventional ML methods. However, its advantages diminish as noise decreases: stochastic features are computationally unnecessary in settings where data is “clean'', “simple'', or “ample'' enough to easily learn the generalizing classes (see App. \ref{GCM_Cancer} for an instructive examlpe). 

Further, FINDER relies on judicious choices of the truncation parameters $M_\bA$ and $\Mres$. Empirically, the MLS method at least appears to demonstrate experimental predictability, with performance improving as $M_\text{res}$ is increased. In contrast, the ACA methods demonstrate less obvious patterns in performance as $M_\text{res}$ is varied. Some heuristics for choosing $M_\mathbf A$ in particular are informed by Scree plots and the intuition of the user for their dataset. We are currently researching better methods for making these choices, but it remains a stark weakness. 

Another significant limitation lies in the Lemma \ref{Markovs_Inequality_Body} bounds being sub-optimal, as we eliminate the need to estimate $Y_r$ in our implementations to save computational costs. Furthermore, if the eigen-pairs $(\lambda^\bA_r, \phi^\bA_r)$ and $(\lambda^\bB_r, \phi^\bB_r)$ are too similar, then classification becomes nearly impossible. 

Finally, FINDER is still only a binary classification regime: we offer no true breakthroughs on multi-class problems, beyond decomposing them into a collection of costly binary ones.

Mitigating these weaknesses will remain the core focus of our future work. However, FINDER's initial results and applications are not just promising, but definitive evidence of its value.

\section{Author Contributions and Acknowledgments}
T. M., A. S. D., and J. E. C. were primarily responsible for the mathematical framing and results in this work. The codebase was primarily designed by T. M. and J. E. C., while all authors contributed to the design and validation of numerical experiments.

We thank Mr. Jeffrey Lai (UT Austin, USA) for his extensive comments and suggestions on the entirety of the work and Ms. Pak Yiu Liu (HKPU, Hong Kong SAR) for her comments on separability. In addition, we are thankful to Xiaoling Zhang (BU), Tong Tong (BU) and Kaili Shi (BU) for helping curate the ADNI data.

This work is based on research supported by the National Science Foundation under Grants No. 2347698, 1736392 and 2319011. A.S.D.’s research is supported by the EPSRC Centre for Doctoral Training in Mathematics of Random Systems: Analysis, Modelling and Simulation (EP/S023925/1). A.S.D.’s research is also funded by the President’s PhD Scholarships at Imperial College London. A.S.D. would also like to acknowledge support from the NSF IAIFI, based in Massachusetts, USA.

We also thank Dileep Bhattacharya (De novo Analytics) for providing the gene expression AD data.

Data collection and sharing for this project was funded by the Alzheimer's Disease Neuroimaging Initiative
(ADNI) (National Institutes of Health Grant U01 AG024904) and DOD ADNI (Department of Defense award
number W81XWH-12-2-0012). ADNI is funded by the National Institute on Aging, the National Institute of
Biomedical Imaging and Bioengineering, and through generous contributions from the following: AbbVie,
Alzheimer’s Association; Alzheimer’s Drug Discovery Foundation; Araclon Biotech; BioClinica, Inc.; Biogen;
Bristol-Myers Squibb Company; CereSpir, Inc.; Cogstate; Eisai Inc.; Elan Pharmaceuticals, Inc.; Eli Lilly and Company; EuroImmun; F. Hoffmann-La Roche Ltd and its affiliated company Genentech, Inc.; Fujirebio; GE
Healthcare; IXICO Ltd.; Janssen Alzheimer Immunotherapy Research \& Development, LLC.; Johnson \&
Johnson Pharmaceutical Research \& Development LLC.; Lumosity; Lundbeck; Merck \& Co., Inc.; Meso
Scale Diagnostics, LLC.; NeuroRx Research; Neurotrack Technologies; Novartis Pharmaceuticals
Corporation; Pfizer Inc.; Piramal Imaging; Servier; Takeda Pharmaceutical Company; and Transition
Therapeutics. The Canadian Institutes of Health Research is providing funds to support ADNI clinical sites
in Canada. Private sector contributions are facilitated by the Foundation for the National Institutes of Health (\href{www.fnih.org}{www.fnih.org}). The grantee organization is the Northern California Institute for Research and Education, and the study is coordinated by the Alzheimer’s Therapeutic Research Institute at the University of Southern California. ADNI data are disseminated by the Laboratory for Neuro Imaging at the University of Southern California.

\renewcommand\bibname{{References}}
\bibliography{References}
\bibliographystyle{plainnat}

\appendix

\section{Mathematical Details}\label{KLE_full_proof}
We begin by assuming $\mathcal H$ is some arbitrary Hilbert space over the field $\R$ and $(\Omega, \mathcal F, \mathbb P)$ is a complete probability space, very mildly generalizing from the usual assumptions \cite{KL_SCHWAB_06}.

We will now derive Thm. \ref{Generalized_KL_Expansion} through a sequence of intermediate results. Let us first define: 
\subsection{Bochner Spaces}\label{Bochner_space}
\begin{definition}[Simple functions]\label{Simple_functions}
We say that $v: \Omega \mapsto \mathcal H$ is \textit{simple} if there exists a finite set of mutually disjoint measurable sets $\bset{E_n}_{n=1}^N$ and vectors $\bset{e_n}_{n=1}^N$ such that $v(\omega) = \D \sum_{n=1}^N \mathbb I_n(\omega) e_n$, where $\mathbb I_n(\omega) \equiv \mathbb I _{E_n}(\omega)$ are the indicator functions. 
\end{definition}
\begin{definition}[Bochner-measurable]\label{Bochner_measurable}
We say that $v: \Omega \mapsto \mathcal H$ is \textit{Bochner-measurable} if there exists a sequence of simple functions $\bset{v_n}_{n=1}^\infty$ such that for $\mathbb P$-a.e., we have $\D \lim_{n \to \infty} \|v_n(\omega) - v(\omega)\|_\mathcal H = 0$. 
\end{definition}
Throughout this paper, we will assume the following conditions on $v$:
\begin{assumption} \label{weakly measurable}
$v(\omega)$ is a Bochner-measurable function from $\Omega$ to $\mathcal H$. 
\end{assumption}
\begin{assumption} \label{square summable}
$\intp{\|v(\omega)\|_\mathcal H^2} < \infty$ 
\end{assumption}

The set of all such $v: \Omega \to \mathcal H$ satisfying assumptions \eqref{weakly measurable} - \eqref{square summable} is denoted $\mathcal L^2(\Omega, \mathcal H)$, which constitutes a vector space under pointwise addition and scalar multiplication. The space $L^2(\Omega, \mathcal H)$ comprises all the distinct equivalence classes in $\mathcal L^2(\Omega, \mathcal H)$ where two functions are declared equivalent if they agree almost surely. 
We define an inner product on $L^2(\Omega, \mathcal H)$ by $\aset{u,v}_{L^2(\Omega, \mathcal H)} := \intp{\aset{u(\omega),v(\omega)}_\mathcal H}$. With this inner product, $L^2(\Omega, \mathcal H)$ gains the structure of a Hilbert space. The purpose of this section is to develop what is known as the \textit{Karhunen-Lo\`eve (KL) expansion} of a random element $v$, given by 

\begin{equation} \label{KL-expansion}
v(\omega) = \E(v) + \sum_{r=1}^R \lambda_r^{1/2} Y_r(\omega) \phi_r
\end{equation}

where $R \in \N \cup \bset{\aleph_0}$, $\E(v)$ is the expectation of $v$ (made precise later), $\bset{\lambda_r}_{r=1}^R$ is a non-increasing sequence of positive real numbers with $\lambda_n \searrow 0$, $\bset{Y_r}_{r=1}^R$ is an orthonormal set in $L^2(\Omega)$, and $\bset{\phi_r}_{r=1}^R$ is an orthonormal set in $\mathcal H$. The proof of this expansion, which will be developed in this paper, relies on some elementary results about compact operators in Hilbert spaces. 

By the Pettis theorem \cite{Vector_Measures_Diestel_77}, $v: \Omega \to \mathcal H$ is Bochner-measurable if and only if $v$ is weakly measurable (i.e. the scalar-valued mapping $\omega \mapsto \aset{v(\omega), e}_\mathcal H$ is measurable for each $e \in \mathcal H$) and essentially separably valued (i.e. $v(\Omega_1)$ is separable for some $\Omega_1 \in \mathcal F$ with $\mathbb P(\Omega_1) = 1$). If $\mathcal H$ is separable (which we do not necessarily assume), then we need only verify that $v:\Omega \to \mathcal H$ is weakly measurable to assert its membership in $L^2(\Omega, \mathcal H)$.

\begin{remark}
If $\CH = L^2([a,b])$, then Assumption \ref{square summable} reduces to $\intp{\int_a^b |v(x,\omega)|^2 dx} < \infty$ and the $L^2(\Omega, \CH)$ inner product is given as $\aset{u,v}_{L^2(\Omega, \CH)} = \intp{\int_a^b v(x,\omega)u(x,\omega) dx}$. 
\end{remark}

\subsection{Hilbert-Schmidt Spaces}\label{Equiv_spaces} 

Let $\mathcal G, \mathcal H$ be two Hilbert spaces over $\R$. Given $g \in \mathcal G$ and $h \in \mathcal H$, we may define a rank-one operator $\mathcal H \to \mathcal G$, symbolized by $g \otimes h$, given by $(g \otimes h)(x) = \aset{h,x}_\mathcal H g$ for all $x \in \mathcal H$.

We may define an inner product on two such operators as follows: 

\begin{equation} \label{Hilbert-Schmidt Inner Product}
\aset{g_1 \otimes h_1, g_2 \otimes h_2} := \aset{g_1, g_2}_{\mathcal G} \aset{h_1, h_2}_\mathcal H
\end{equation}

 The space $\text{HS}(\mathcal H, \mathcal G)$ is defined to be the Hilbert space formed by the closure of the linear span of the operators of the form $g \otimes h$ w.r.t. the inner product \eqref{Hilbert-Schmidt Inner Product}. If $\mathcal G = L^2(\Omega)$, we can embed $\text{HS}(\mathcal H, L^2(\Omega))$ into $L^2(\Omega, \mathcal H)$ via the correspondence $X \otimes e \mapsto Xe$. We prove that this inclusion is actually an isometric isomorphism by adapting \cite[Prop. 1.4.4]{Banach_Anal_16} 

\begin{lemma} \label{HS is isomorphic to Bochner}
    $L^2(\Omega, \mathcal H)$ is isometrically isomorphic to $\text{HS}(\mathcal H, L^2(\Omega))$. 
\end{lemma}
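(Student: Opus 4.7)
The plan is to construct the isometric isomorphism $\Phi: \HS(\mathcal H, L^2(\Omega)) \to L^2(\Omega, \mathcal H)$ by first defining it on simple tensors and then extending by density/continuity, before finally proving surjectivity via the essentially-separably-valued structure of elements of $L^2(\Omega, \mathcal H)$.

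First I would define $\Phi$ on rank-one tensors by $\Phi(X \otimes e)(\omega) \eqdef X(\omega)\, e$, observing that this is Bochner-measurable (since $X$ is scalar-measurable and $\omega \mapsto X(\omega) e$ is strongly measurable into the separable subspace $\R e \subset \mathcal H$) and belongs to $L^2(\Omega, \mathcal H)$ with
\[
    \|\Phi(X\otimes e)\|_{L^2(\Omega,\mathcal H)}^2 = \intp{|X(\omega)|^2\,\|e\|_\mathcal H^2} = \|X\|_{L^2(\Omega)}^2\|e\|_\mathcal H^2 = \|X\otimes e\|_{\HS}^2.
\]
A direct bilinear computation using the definition \eqref{Hilbert-Schmidt Inner Product} then shows $\aset{\Phi(X_1\otimes e_1), \Phi(X_2\otimes e_2)}_{L^2(\Omega,\mathcal H)} = \aset{X_1\otimes e_1, X_2\otimes e_2}_{\HS}$, so $\Phi$ extends by linearity to an isometry on finite linear combinations of simple tensors, and then by continuity (using completeness of $L^2(\Omega,\mathcal H)$) to all of $\HS(\mathcal H, L^2(\Omega))$. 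This establishes injectivity and the isometry property in one stroke.

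The main step is surjectivity. Given $v \in L^2(\Omega, \mathcal H)$, by the Pettis theorem invoked just before the lemma, $v$ is essentially separably valued, so there exists $\Omega_1 \in \mathcal F$ of full measure with $\overline{\mathrm{span}}\, v(\Omega_1) \subset \mathcal H$ separable; pick a countable orthonormal basis $\bset{e_n}_{n=1}^\infty$ of this closed subspace. Define $X_n(\omega) \eqdef \aset{v(\omega), e_n}_\mathcal H$, which is in $L^2(\Omega)$ by weak measurability of $v$ and Cauchy--Schwarz (with $\sum_n \|X_n\|_{L^2(\Omega)}^2 = \|v\|_{L^2(\Omega,\mathcal H)}^2 < \infty$ by Parseval applied pointwise and Fubini/Tonelli). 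I would then set $T_N \eqdef \sum_{n=1}^N X_n \otimes e_n$ and verify that $\bset{T_N}$ is Cauchy in $\HS$ with limit $T \in \HS(\mathcal H, L^2(\Omega))$ satisfying $\|T\|_\HS^2 = \sum_n \|X_n\|_{L^2(\Omega)}^2 = \|v\|_{L^2(\Omega,\mathcal H)}^2$. A final Parseval/Fubini argument shows that $\Phi(T)(\omega) = \sum_n X_n(\omega) e_n = v(\omega)$ for $\mathbb P$-a.e.\ $\omega$, which gives the required preimage.

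The hard part will be the surjectivity step, specifically the dual bookkeeping required to move between the $\HS$-convergence of $\sum_n X_n \otimes e_n$ and the $L^2(\Omega,\mathcal H)$-convergence of $\omega \mapsto \sum_n X_n(\omega) e_n$: both rely on the same scalar identity $\sum_n |X_n(\omega)|^2 = \|v(\omega)\|_\mathcal H^2$ for a.e.\ $\omega$, but justifying interchanges of summation with the Bochner integral and pointwise evaluation requires careful use of Fubini/Tonelli together with the Pettis-theorem hypothesis to reduce to the separable setting. Once those interchanges are justified, bijectivity and the isometry already proven on the dense span combine to finish the lemma.
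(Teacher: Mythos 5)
Your proposal is correct and follows essentially the same route as the paper's proof: verify the isometry on rank-one tensors via the inner-product identity $\aset{X_1\otimes e_1, X_2\otimes e_2}_{\HS} = \intp{\aset{X_1(\omega)e_1, X_2(\omega)e_2}_\mathcal H}$, then obtain surjectivity by using essential separability (via Pettis) to expand $v(\omega) = \sum_j \aset{v(\omega), e_j}_\mathcal H\, e_j$ in a countable orthonormal basis with coefficients in $L^2(\Omega)$. The only cosmetic difference is that the paper closes the convergence argument with dominated convergence in $L^2(\Omega,\mathcal H)$ rather than your Cauchy-in-$\HS$ plus Parseval/Tonelli bookkeeping, but these are interchangeable given the isometry already established on the dense span.
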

\begin{proof}
 To first show that the inclusion $\iota: \text{HS}(\mathcal H, L^2(\Omega)) \to L^2(\Omega, \mathcal H)$ is an isometry, let $X_1 \otimes e_1, X_2 \otimes e_2$ be two elements of $\text{HS}(\mathcal H, L^2(\Omega))$, whence we have: 

 \begin{align*}
     \aset{X_1 \otimes e_1, X_2 \otimes e_2}_{\text{HS}(\mathcal H, L^2(\Omega))} 
     &= 
     \aset{X_1, X_2}_{L^2(\Omega)} \aset{e_1, e_2}_\mathcal H 
     \\&= 
     \intp{\aset{X_1(\omega)e_1, X_2(\omega)e_2}_\mathcal H} 
     \\&= 
     \aset{X_1e_1, X_2e_2}_{L^2(\Omega, \mathcal H)}
 \end{align*}

To show that $\iota$ is an isomorphism, it suffices to show that each $v \in L^2(\Omega, \mathcal H)$ may be written as a potentially infinite sum $\D \sum_{j=1}^N X_je_j$ for appropriate $X_j \in L^2(\Omega)$, $ e_j \in \mathcal H$. As $v$ is essentially separably valued, we may assume $v$ takes all of its values in a separable subset $Z \subseteq \mathcal H$. Letting $\bset{e_j}_{j=1}^N$ ($N \in \N \cup \aleph_0)$ be an orthonormal basis of $\overline{\Span{Z}}$, we may expand $v(\omega)$ in its Fourier series with respect to the basis $\bset{e_j}_{j =1}^N$ as follows 

$$v(\omega) := \sum_{j=1}^N X_j(\omega) e_j$$

where $X_j(\omega) := \aset{v(\omega), e_j}_\mathcal H$. Each $X_j$ is measurable ($v$ is weakly measurable) and square summable (apply Cauchy-Schwarz inequality), and thus lies in $L^2(\Omega)$. If $N < \infty$, we are done. Otherwise, we must show that $\D \lim_{n \to \infty} \sum_{j=1}^n X_je_j \to v$ in the $L^2(\Omega, \CH)$ norm. In this case, the remainder $\left \|\D v(\omega) - \sum_{j=1}^n X_j(\omega) e_j \right \|^2_\mathcal H \to 0$ as $n \to \infty$ and is uniformly dominated by $\|v(\omega)\|_\mathcal H^2$ $\mathbb P$-a.e. The Dominated Convergence Theorem ensures that $$\D \left \|v - \sum_{j > n} X_je_j \right \|^2_{L^2(\Omega, \mathcal H)} = \intp{\left \|v(\omega) - \sum_{j > n} X_j(\omega) e_j \right \|^2_\mathcal H} \to 0 \qquad \text{as} \qquad n \to \infty.$$
\end{proof}

\subsection{Bochner integrals and their properties}\label{KLE_proof} 
\begin{definition}[Pettis integral]
Let $v \in L^2(\Omega, \mathcal H)$. The Pettis integral is the unique $\mu \in \mathcal H$ for which $\aset{\mu, e}_\mathcal H = \intp{\aset{v(\omega), e}_\mathcal H}$ for all $e \in \mathcal H$.   
\end{definition}
\begin{definition}[Bochner integral]
Let $v \in L^2(\Omega, \mathcal H)$. We call $\E(v) := \intp v$, the \textit{Bochner} integral of $v$. Given an approximating sequence of simple functions $\bset{v_n}_{n \in \N}$, the Bochner integral of $v$ is the limiting value of $\intp{v_n(\omega)}$ as $n \to \infty$ (provided it exists).
\end{definition}

With these preliminaries in mind, we can now prove Theorem  \ref{Generalized_KL_Expansion}

\begin{proof}
Since all Hilbert spaces satisfy the Radon-Nikodým Property (RNP) and $v$ is Bochner integrable, it is also Pettis integrable and the two integrals coincide \cite[Ch. 2]{Vector_Measures_Diestel_77}. Thus we may instead consider the element $v_0 := v - \E(v)$, which also lies in $L^2(\Omega, \mathcal H)$. By our identification of $L^2(\Omega, \mathcal H)$ with $\text{HS}(\mathcal H, L^2(\Omega))$, there exists a unique Hilbert-Schmidt corresponding to $v_0$, which we will denote $H_{v_0}$. As all Hilbert-Schmidt operators are compact, we may invoke the spectral theorem to write the SVD of $H_{v_0}$ as $\D \sum_{r=1}^R \lambda_r^{1/2} Y_r \otimes \phi_r$ where $R \in \N \cup \bset{\aleph_0}$. $\bset{\lambda_r^{1/2}}_{r=1}^R$ comprises the non-increasing real-valued sequence of singular values of $H_{v_0}$ with $\lambda_r \searrow 0$. $\bset{Y_r}_{r=1}^R$ comprises the left singular vectors of $H_{v_0}$ and forms an orthonormal set in $L^2(\Omega)$. $\bset{\phi_r}_{r=1}^R$ comprises the right singular vectors of $H_{v_0}$ and forms an orthonormal set in $\mathcal H$. Using the correspondence once more:
\[\D v_0 = v - \E(v) = \sum_{r=1}^R \lambda_r^{1/2} Y_r \phi_r \qquad \qquad v_0 \in L^2(\Omega, \mathcal H)\] 
\end{proof}

\subsection{The Covariance Operator}\label{Cov_v}

For each $v \in L^2(\Omega, \mathcal H)$ the operator $H_{v_0}$ may be constructed as the mapping $e \mapsto \aset{v - \E(v), e}_\mathcal H$. We define the \textit{covariance operator} of $v$ by $H_{v_0}^*H_{v_0}$ and denote it $\mathcal C_v$. We immediately obtain 
\[\mathcal C_v(e) = \E(\aset{v-\E(v),e}_\mathcal H (v - \E(v)))\]
for each $e \in \mathcal H$. We obtain the sequences $\bset{\lambda_r}_{r=1}^R$ and $\bset{\phi_r}_{r=1}^R$ in the KL expansion of $v$ as the descending sequence of eigenvalues and eigenvectors of $\mathcal C_v$. The sequence $\bset{Y_r}_{r=1}^R$ comprises the left singular vectors of $H_{v_0}$, obtained as $$Y_r := \lambda_r^{-1/2} H_{v_0} \phi_r$$ Furthermore, each $Y_r$ has expectation zero, which results from the fact that the Bochner (and thus Pettis) integral of $v - \E(v)$ is zero. Indeed, for any $s \leq R$, we have 

\begin{align*}
    0 &= 
    \aset{\lambda^{-1/2}_s \phi_s, \intp{ v(\omega) -\E(v)}}_\mathcal H
    \\ &= 
    \intp{ \aset{\lambda^{-1/2}_s \phi_s, v(\omega) -\E(v)}_\mathcal H}
    \\&= 
    \intp{Y_s(\omega)}
\end{align*}

\begin{remark}
If $\mathcal H = L^2([a,b])$, then $\E(v) = \intp{v(-,\omega)}$. Furthermore, the covariance operator associated to $v$ is a kernel operator whose kernel is 
$$\mathcal K(x,y) = \intp{[v(x,\omega) - \E((v(x,\omega))][v(y,\omega) - \E((v(y,\omega))] },$$ that is, for any $f \in L^2([a,b])$, $C_v(f)(x) = \D \int_a^b \mathcal K(x,y) f(y) dy$. 
\end{remark}

\subsection{Computable Subspaces} \label{Computable_Subspace}

For the purposes of computation, only finite dimensional Hilbert spaces are admissible. Suppose $\CH_{M} \leq \CH$ is such a finite dimensional subspace and $P_{\CH_M}$ is the projection onto $\CH_M$ operator. A desirable quality of FINDER, or any machine learning approach, would be that this finite dimensional digitization that occurs when replacing $\CH$ with $\CH_M$ also translates to replacing $L^2(\Omega, \CH)$ with $L^2(\Omega, \CH_M)$ and $\text{HS}(\CH, L^2(\Omega))$ with $\text{HS}(\CH_M, L^2(\Omega))$. Mathematically, we would like to verify that $P_{\CH_M}$ induces maps  $P': L^2(\Omega, \CH) \to L^2(\Omega, \CH_M)$ and $P'': \text{HS}(\CH, L^2(\Omega)) \to \text{HS}(\CH_M, L^2(\Omega))$ such that the following diagram commutes: 

\[
\begin{tikzcd}
L^2(\Omega, \CH) \arrow[r, "P'"] \arrow[d, "\eta"] & 
L^2(\Omega, \CH_M) \arrow[d, "\eta"] \\
\text{HS}(\CH, L^2(\Omega)) \arrow[r, "P''"] & 
\text{HS}(\CH_M, L^2(\Omega))
\end{tikzcd}
\]

where $\eta$ is the correspondence $Xe \leftrightarrow X  \otimes e$.  In the language of category theory, if  $L^2(\Omega, -)$ and $\text{HS}(-, L^2(\Omega))$  are functors from the category of Hilbert spaces over $\R$ to itself, then $\eta$ should act as a natural isomorphism between these two functors. 

\begin{proof}
    In fact, we put $P'$ as the map $v \mapsto P_{\CH_M}v$ and $P''$ as the map $H \mapsto HP_{\CH_M}$ and compute for each element of the form $X e \in L^2(\Omega, \CH)$

    \begin{align*}
        \eta (P'(X e))
        &= 
        \eta( X (P_{\CH_M}e))
        = 
        X \otimes (P_{\CH_M}e)
        =
        P''(X \otimes e)
        = 
        P'' \eta( X e)
    \end{align*}
\end{proof}

\section{Applications to binary classification} \label{Applications to binary classification}

Consider two random elements of $L^2(\Omega, \mathcal H)$, $v^\mathbf A$ and $v^\mathbf B$, whose instantiations $v^\bA(\omega)$ and $v^\bB(\omega)$ are thought of as belonging to Class $\mathbf A$ or Class $\mathbf B$. This section presents a method for determining whether or not an observed random element $u \in \CH$ belongs to Class $\bA$ or Class $\bB$. 

\begin{definition}[Equality (in distribution)]
$v, \tilde v \in L^2(\Omega, \CH)$ are said to be equal in distribution, denoted $v \stackrel{d}{=} \tilde v$, if $\Pr(v \in B) = \Pr(\tilde v \in B)$ for all Borel-measurable subsets $B$ of $\mathcal H$. 
\end{definition}

\begin{remark} We can restrict to the cases where $B$ is an open set in $\mathcal H$.
\end{remark}

\begin{corollary}\label{Equal_prob_means_eqal_cov}
Let $v \stackrel{d}{=} \tilde v$. Then $\mathcal C_v$ and $\mathcal C_{\tilde v}$ share the same spectrum, i.e., $\bset{\lambda_r}_{r=1}^R = \bset{\tilde \lambda_r}_{r=1}^R$. Further, $\ker(\mathcal C_v - \lambda_r I) = \ker(\mathcal C_{\tilde v} - \tilde \lambda_r I)$) for each $r = 1,2, \dots R$.
\end{corollary}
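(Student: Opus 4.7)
The plan is to show that equality in distribution forces the covariance operators themselves to be equal as operators on $\CH$, from which the spectrum and eigenspace claims are immediate. The core observation is that both $\E(v)$ and $\mathcal C_v$ are constructed via expectations of (at worst) quadratic continuous functionals of $v(\omega)$, and such expectations depend only on the pushforward law of $v$ on $\CH$.

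First I would verify that $\E(v) = \E(\tilde v)$. Since $\CH$ has the Radon--Nikod\'ym property, the Bochner integral coincides with the Pettis integral, so it suffices to show $\aset{\E(v), e}_\CH = \aset{\E(\tilde v), e}_\CH$ for every $e \in \CH$. Let $\mu_v, \mu_{\tilde v}$ be the pushforward measures of $v, \tilde v$ on $\CH$. The hypothesis $v \stackrel{d}{=} \tilde v$ gives $\mu_v = \mu_{\tilde v}$ on the Borel $\sigma$-algebra, and the functional $x \mapsto \aset{x,e}_\CH$ is continuous. Integrability of $x \mapsto |\aset{x,e}_\CH|$ against $\mu_v$ follows from Assumption \ref{square summable} via Cauchy--Schwarz, so the standard change-of-variables formula yields $\intp{\aset{v(\omega),e}_\CH} = \intp{\aset{\tilde v(\omega),e}_\CH}$, hence equal means.

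Next I would show $\mathcal C_v = \mathcal C_{\tilde v}$ as bounded operators on $\CH$. Fix $e,f \in \CH$ and compute
\[
\aset{\mathcal C_v(e), f}_\CH = \intp{\aset{v(\omega)-\E(v),e}_\CH \aset{v(\omega)-\E(v),f}_\CH} = \int_\CH g_{e,f}(x)\,d\mu_v(x),
\]
where $g_{e,f}(x) := \aset{x-\E(v),e}_\CH\aset{x-\E(v),f}_\CH$ is a continuous quadratic form on $\CH$. Since $\E(v)=\E(\tilde v)$ by the previous step, the same function $g_{e,f}$ represents $\aset{\mathcal C_{\tilde v}(e),f}_\CH$ as an integral against $\mu_{\tilde v}$. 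Integrability of $g_{e,f}$ against both measures again follows from Assumption \ref{square summable} and Cauchy--Schwarz, so $\mu_v = \mu_{\tilde v}$ gives $\aset{\mathcal C_v(e),f}_\CH = \aset{\mathcal C_{\tilde v}(e),f}_\CH$. As $e,f$ were arbitrary, $\mathcal C_v = \mathcal C_{\tilde v}$. The two operators coincide, so they trivially have identical spectra and identical eigenspaces $\ker(\mathcal C_v - \lambda_r I) = \ker(\mathcal C_{\tilde v} - \lambda_r I)$.

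The main obstacle is the transfer of the quadratic (unbounded) functional $g_{e,f}$ across equality in distribution, since equal in distribution is usually phrased against bounded continuous test functions. The resolution, noted above, is that $L^2$ integrability of $\|v\|_\CH$ dominates $|g_{e,f}|$ uniformly in a measure-theoretic sense, allowing the change-of-variables identity on $\CH$ to be applied directly without any truncation or approximation argument. Once this measure-theoretic point is made rigorous, the corollary follows essentially by inspection.
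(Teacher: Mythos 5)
Your proposal is correct. The paper states this as a corollary without supplying any proof, and your argument is exactly the one it implicitly relies on: since $\E(v)$ and the bilinear form $\aset{\mathcal C_v(e),f}_\CH$ are integrals of Borel-measurable functionals of $v$ whose integrability is guaranteed by Assumption \ref{square summable}, they depend only on the pushforward law of $v$, so $v \stackrel{d}{=} \tilde v$ forces $\mathcal C_v = \mathcal C_{\tilde v}$ as operators, which gives the (in fact stronger) conclusion about spectra and eigenspaces immediately. Your handling of the measure-theoretic transfer is sound because the paper's definition of equality in distribution is stated over all Borel sets, so the image-measure change-of-variables identity applies to any integrable Borel functional, not merely bounded continuous ones.
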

Note the converse need not be true; if $\mathcal C_v = \mathcal C_{\tilde v}$, the it need not be the case that $v \stackrel{d}{=} \tilde v$. In fact, if $\mathcal C_v$ possesses eigen-pairs $\bset{\lambda_r, \phi_r}_{r=1}^R$, then for any zero mean, orthonormal sequence $\bset{\tilde Y_r(\omega)}_{r=1}^R \subseteq L^2(\Omega)$, the random element $\D \tilde v := \sum_{r=1}^R \lambda_r^{1/2} \tilde Y_r(\omega) \phi_r$ admits covariance operator $\mathcal C_{\tilde v}$ equal to $\mathcal C_v$. 

Although knowledge of the sequence of singular values $\bset{\lambda_r^{1/2}}_{r=1}^R$, right singular vectors $\bset{\phi_r}_{r=1}^R$, and left singular vectors (random variables) $\bset{Y_r}_{r=1}^R$ are all needed to fully describe the distribution of the random element $v$, we describe a classification method which does not require knowledge of the sequence of random variables $\bset{Y_r}_{r=1}^R$ present in the KL expansion of $v$. This is particularly advantageous, since the $\bset{Y_r}_{r=1}^R$ are not necessarily independent, just uncorrelated, thus estimation of the joint distribution of $\bset{Y_r}_{r=1}^R$ requires a high dimensional estimation problem which is quite difficult on even a moderately sized empirical dataset $\bset{v_i}_{i = 1}^N$ without additional assumptions on $v$. 

 In contrast, the estimation of the eigen-pairs $\bset{\widehat \lambda_r^{1/2}, \widehat \phi_r}_{r=1}^R$ from the empirical covariance operator $\widehat { \mathcal C_v} := \D \frac{1}{N-1} \sum_{i=1}^N (v_i - \overline{v}) \otimes (v_i - \overline{v})$ (with $\overline v := \D \frac{1}{N} \sum_{i=1}^N v_i$) is relatively easy. In particular, if $\mathcal H = \R^P$, then $\widehat {\mathcal C_v}$ is just the empirical covariance matrix, and the eigen-pairs $\bset{\widehat \lambda_r^{1/2}, \widehat \phi_r}_{r=1}^R$ are just the square roots of the eigenvalues and the eigenvectors of this matrix. 

\subsection{Class Concentration Bounds} \label{Class Concentration Bounds} 

Throughout the remainder of this paper, assume that $\E(v^\mathbf A) = 0$, since we can impose this by setting $v^\mathbf A \to v^\mathbf A - \E(v^\mathbf A)$ and $v^\mathbf B \to v^\mathbf B - \E(v^\mathbf A)$. Let $v^\mathbf A$ and $v^\mathbf B$ have KL expansions:

\begin{align*}
    v^\mathbf A &= \sum_{r=1}^{R_\mathbf A } {\lambda_r^\mathbf A}^{1/2} Y_r^\mathbf A \phi_r^\mathbf A \\
    v^\mathbf B &= \E(v^\mathbf B) + \sum_{r=1}^{R_\mathbf B} {\lambda_r^\mathbf B}^{1/2} Y_r^\mathbf B \phi_r^\mathbf B \\
\end{align*}

We will also make use of following corollaries of Hille's theorem \cite[Thm. 1]{E_T_commute_Sullivan_24} throughout:
\begin{lemma} \label{expectation and BLOs commute}
    Let $K: \mathcal H \to \mathcal H$ be a bounded linear operator. For any $v \in L^2(\Omega, \mathcal H)$, we have that: $$\E(Kv) = K\E(v), \qquad\qquad\qquad \mathcal C_{Kv} = K \mathcal C_v K^*$$
\end{lemma}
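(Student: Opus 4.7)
The plan is to establish the two identities in sequence, using the first to derive the second.

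For the first identity $\E(Kv) = K\E(v)$, I would appeal directly to Hille's theorem, which the paper explicitly cites: for a bounded linear operator $K$ and a Bochner-integrable $v$, the operator may be moved through the Bochner integral. For completeness, I would sketch the standard three-step argument. First, verify the identity on simple functions $v = \sum_{n=1}^N \mathbb{I}_{E_n} e_n$, where both sides reduce to $\sum_n \mathbb{P}(E_n) K e_n$ by linearity of $K$. Second, take an approximating sequence of simple functions $v_n \to v$ pointwise $\mathbb{P}$-a.e.\ with $\|v_n\|_{\mathcal H} \leq 2\|v\|_{\mathcal H}$ (as guaranteed by Bochner-measurability). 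Third, pass to the limit on both sides using the boundedness $\|Kv_n - Kv\|_{\mathcal H} \leq \|K\|_{\mathrm{op}} \|v_n - v\|_{\mathcal H}$ together with the Dominated Convergence Theorem for Bochner integrals. Since $v \in L^2(\Omega, \mathcal H)$, integrability is immediate from Cauchy-Schwarz.

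For the second identity $\mathcal C_{Kv} = K \mathcal C_v K^*$, I would unfold the definition and use the first identity twice. Start with
\begin{equation*}
\mathcal C_{Kv}(e) = \E\bigl(\aset{Kv - \E(Kv), e}_{\mathcal H} (Kv - \E(Kv))\bigr).
\end{equation*}
By the first identity, $\E(Kv) = K\E(v)$, so $Kv - \E(Kv) = K(v - \E(v))$. Moving $K$ out of the inner product using its adjoint gives $\aset{K(v-\E(v)), e}_{\mathcal H} = \aset{v - \E(v), K^* e}_{\mathcal H}$. Therefore
\begin{equation*}
\mathcal C_{Kv}(e) = \E\bigl(\aset{v - \E(v), K^* e}_{\mathcal H} \, K(v - \E(v))\bigr).
\end{equation*}
Now apply the first identity again with the bounded linear operator $K$ pulled out of the expectation (the scalar factor $\aset{v - \E(v), K^* e}_{\mathcal H}$ commutes trivially), yielding $K \, \E(\aset{v - \E(v), K^* e}_{\mathcal H}(v - \E(v))) = K \mathcal C_v(K^* e)$.

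The only genuine technical point is justifying that one may pull $K$ through the expectation, which is precisely Hille's theorem; everything else is a short manipulation. I do not expect a real obstacle, since we are operating in a Hilbert space (which automatically has the Radon-Nikod\'ym property used earlier in the paper to identify Pettis and Bochner integrals) and $K$ is bounded, so no additional measurability or integrability subtleties arise.
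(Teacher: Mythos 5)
Your proof is correct and takes essentially the same route as the paper: the paper states this lemma without proof, presenting both identities as corollaries of Hille's theorem, which is exactly what you invoke for $\E(Kv) = K\E(v)$. Your derivation of $\mathcal C_{Kv} = K\mathcal C_v K^*$ by unfolding the definition, writing $Kv - \E(Kv) = K(v-\E(v))$, passing $K$ to its adjoint inside the inner product, and applying the first identity again is the routine computation the paper leaves implicit, and it checks out (integrability of $\aset{v-\E(v),K^*e}_{\mathcal H}(v-\E(v))$ follows from Cauchy--Schwarz and $v \in L^2(\Omega,\mathcal H)$).
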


The method of classification consists of computing an orthonormal basis $\bset{s_m}_{m=1}^{M_\text{res}}$ of a finite dimensional subspace ${\CHres} \leq \mathcal H_M$ for which the projections $P_{\CHres}v^\mathbf A$ and $P_{\CHres}(v^\mathbf B - \E(v^\mathbf B))$ (which equals by $P_{\CHres}v^\mathbf B - \E(P_{\CHres}v^\mathbf B)$ Lemma \ref{expectation and BLOs commute}) concentrate in distinct regions of $\mathcal H$ with high probability (recall that we've assumed that $\E(v^\mathbf A)$ has been pre-subtracted from both $v^\mathbf A$ and $v^\mathbf B$). To this end, we establish the following bounds for the quantities $\intp{\|P_{\CHres} v^\mathbf A\|_\mathcal H^2}$ and $\intp{\|P_{\CHres}(v^\mathbf B - \E(v^\mathbf B))\|^2_\mathcal H}$.

\begin{align*}
    \intp{\|P_{\CHres} v^\mathbf A\|_\mathcal H^2} 
    &= 
   \intp{ \sum_{m=1}^{M_\text{res}} \aset{s_m, v^\mathbf A}_\mathcal H^2 }
    \\&= 
     \sum_{m=1}^{M_\text{res}} \intp{ \aset{\aset{s_m, v^\mathbf A}_\mathcal H v^\mathbf A, s_m}_\mathcal H }
      \\&= 
     \sum_{m=1}^{M_\text{res}} \aset{\E(\aset{s_m, v^\mathbf A}_\mathcal H v^\mathbf A), s_m}_\mathcal H 
       \\&= 
     \sum_{m=1}^{M_\text{res}}  \left( \aset{\mathcal C_{v^\mathbf A} s_m, s_m}_\mathcal H \right)
        \\&= 
     \sum_{m=1}^{M_\text{res}}  \left( \aset{ \sum_{r=1}^{R_\mathbf A} \lambda_r^\mathbf A  \aset{s_m, \phi_r^\mathbf A}_\mathcal H \phi_r^\mathbf A, s_m}_\mathcal H \right)
    \\&= 
     \sum_{m=1}^{M_\text{res}} \sum_{r=1}^{R_\mathbf A} \lambda_r^\mathbf A \aset{\phi_r^\mathbf A, s_m}_\mathcal H^2
\end{align*}

A similar calculation proves that $ \intp{\|P_{\CHres} (v^\mathbf B - \E(v^\mathbf B))\|_\mathcal H^2} = \D   \sum_{m=1}^{M_\text{res}} \sum_{r=1}^{R_\mathbf B} \lambda_r^\mathbf B \aset{\phi_r^\mathbf B, s_m}_\mathcal H^2$. Hence using Markov's (or Chebyshev's first) inequality \cite{Stein05}, we have for any $\varepsilon > 0$ 

\begin{equation} \label{A concentration bound 1}
    \Pr(\|P_{\CHres} v^\mathbf A\|_\mathcal H^2 \geq \varepsilon^2)  \leq 
    \varepsilon^{-2}  \sum_{m=1}^{M_\text{res}} \sum_{r=1}^{R_\mathbf A} \lambda_r^\mathbf A \aset{\phi_r^\mathbf A, s_m}_\mathcal H^2
\end{equation}

and for Class $\mathbf B$

\begin{equation} \label{B concentration bound 1}
    \Pr(\|P_{\CHres} (v^\mathbf B - \E(v^\mathbf B))\|_\mathcal H^2 \geq \varepsilon^2)  \leq 
    \varepsilon^{-2}  \sum_{m=1}^{M_\text{res}} \sum_{r=1}^{R_\mathbf B} \lambda_r^\mathbf B \aset{\phi_r^\mathbf B, s_m}_\mathcal H^2
\end{equation}

The two inequalities \eqref{A concentration bound 1} and \eqref{B concentration bound 1} imply that an upper bound for the probability that a random element $v$ equal in distribution to $v^\mathbf A$ lies inside a ball centered at the origin with probability proportional to $\intp{\|P_{\CHres} v^\mathbf A\|_\mathcal H^2}$ and that $v$ equal in distribution to $v^\mathbf B$ lies inside a ball centered at $P_{\CHres} \E(v^\mathbf B)$ with probability proportional to $\intp{\|P_{\CHres} (v^\mathbf B - \E(v^\mathbf B))\|_\mathcal H^2} = \intp{\|P_{\CHres} v^\mathbf B - \E(P_{\CHres} v^\mathbf B)\|_\mathcal H^2}$.

\subsection{Multilevel Subspaces (MLS)}\label{FINDER_MLS}

The FINDER-MLS approach, adapted from \cite{Castrillon2025}, endeavors to produce a subspace ${\CHres}$ for which the quantity $\D \sum_{m=1}^{M_\text{res}} \sum_{r=1}^{R_\bA} \lambda_r^\bA \aset{\phi_r^\bA, s_m}_\CH^2$ is small.

For then, when the sampled features $v_i^\bA$ are updated as $v_i^\bA \to P_{\CHres} v_i^\bA$, they will concentrate about the origin with high probability. An SVM (with an RBF kernel, if desired) is suitable to binary classification problems where the data is assumed to possess some sort of geometric separation; this method can reliably classify $\bA$ from $\bB$. 

In fact, a sequence of nested subspaces $\bset{\mathbb V_\ell}_{\ell = 0}^L$ ($L \in \N \cup \bset{\aleph_0}$) is created to this end. To say that the sequence of subspaces is nested means that $\mathbb V_\ell \subseteq \mathbb V_{\ell + 1}$ ($\ell \in \N$), and $\overline{ \bigcup_{\ell=0}^L \mathbb V_\ell} = \mathcal H$. Of course, in computation $L$ will be finite. Define $\mathbb W_\ell$ to be the orthogonal complement of $\mathbb V_{\ell - 1}$ in $\mathbb V_\ell$, which makes it so that $\mathcal H = \overline{ \bigcup_{\ell=0}^L \mathbb V_\ell} = \mathbb V_0 \oplus \overline{\bigoplus_{\ell = 1}^L \mathbb W_\ell}$. 

We introduce an integer $M_\bA$ between 1 and $R_\bA$, thinking of it as a ``truncation parameter" for $v^\mathbf A$. Let $\CH_\bA := \Span{\phi_r^\mathbf A}_{r=1}^{M_\mathbf A}$. The MLS approach within FINDER sets $\mathbb V_0 = \CH_\bA$. 
The subspaces $\mathbb V_\ell$ are chosen such that each $\mathbb W_\ell$ are finite dimensional ($\mathbb W_\ell = \Span{\phi^\ell_k}_{k = 1}^{M_\ell}$, $\phi_k^\ell \in \CH$). The $\phi_k^\ell$ are constructed to satisfy the orthonormality condition: $\aset{ \phi_{k_1}^{\ell_1}, \phi_{k_2}^{\ell_2}}_\CH = \delta[k_1 - k_2] \delta[\ell_1 - \ell_2]$. For any choice of $\ell$, $\mathbb V_\ell$ satisfies the inequality 

\begin{equation}
    \Pr(\|P_{\mathbb V_\ell} v^\bA\|_\CH^2 \geq \varepsilon^2) \leq \varepsilon^{-2} \sum_{r > M_\bA} \lambda_r^\bA
\end{equation}

The MLS approach is tractable when $\CH = L^2(U)$ for some $U \subseteq \R^D$. In this context, the $\phi_k^\ell$ are chosen to be characteristic functions whose support is constructed by a kd-tree division of $U$ (where k = $D$). In computation, the $\phi_k^\ell$ are chosen to be vectors in $\R^F$ with relatively few nonzero entries, which lends well to efficient memory usage. Explicit construction of the $\phi_k^\ell$, the choice of $M_\ell$ and $L$ are an adaption of a construction detailed in \cite{Tausch2003}.

\subsection{Anomalous Class Adapted Spaces }
\label{Anomalous Class Adapted Spaces}

The Anomalous Class Adapted (ACA) approach differs from the MLS approach in that the latter does not consider the specific eigen-structure of $\mathcal C_{v^\bB}$, only that it is assumed to be distinct from the eigen-structure of $\mathcal C_{v^\bA}$. Our choice of constructing such a finite-dimensional subspace ${\CHres}$ endeavors to make both the RHS of \eqref{A concentration bound 1} and \eqref{B concentration bound 1} ``small" such that $P_{\CHres}v^\mathbf A$ and $P_{\CHres}v^\mathbf B$ lie in separate parts of $\mathcal H$ with high probability.

\subsubsection{ACA -S}
Like the MLS approach, we consider ${\CHres}$ to be a subspace of $\CH_\bA^\perp$. Then the first $M_\mathbf A$ terms in \eqref{A concentration bound 1} vanish such that 

\begin{equation} \label{A concentration bound 2}
    \Pr(\|P_{\CHres} v^\mathbf A\|_\mathcal H^2 \geq \varepsilon^2)  \leq 
    \varepsilon^{-2}  \sum_{m=1}^{M_\text{res}} \sum_{r=M_\mathbf A + 1}^{R_\mathbf A} \lambda_r^\mathbf A \aset{\phi_r^\mathbf A, s_m}_\mathcal H^2
    \leq 
       \varepsilon^{-2} \sum_{r=M_\mathbf A + 1}^{R_\mathbf A} \lambda_r^\mathbf A 
\end{equation}

To minimize the RHS of \eqref{B concentration bound 1}, subject to ${\CHres} \leq \CH_\bA^\perp$, let $\bset{\psi_i}_{i \in I}$ be an orthonormal basis for $ \CH_\bA^\perp$, let $V_{M_\mathbf A}: \ell^2(I) \to \CH_\bA^\perp$ be the isometry $\D V_{M_\bA} f = \sum_{i \in I} f(i)\psi_i$, whence 

\begin{equation}
    \min_{s_m}  \sum_{m=1}^{M_\text{res}} \sum_{r=1}^{R_\mathbf B} \lambda_r^\mathbf B \aset{\phi_r^\mathbf B, s_m}_\mathcal H^2 =   \min_{s_m}  \sum_{m=1}^{M_\text{res}} \aset{s_m, \mathcal C_{v^\mathbf B} s_m}_\mathcal H = \min_{t_m} 
     \sum_{m=1}^{M_\text{res}} \aset{t_m, V_{M_\bA}^*\mathcal C_{v^\mathbf B}V_{M_\bA} t_m}_\mathcal H. 
\end{equation}

where $t_m = V^*_{M_\bA} s_m$. It follows that the minimum occurs when $t_m$ are the eigenvectors corresponding to the smallest $M_\text{res}$ eigenvalues of $V_{M_\bA}^* \mathcal C_{v^\mathbf B} V_{M_\bA}$. Then we set $s_m = V_{M_\bA}t_m$. 

\textit{Remark}: If $\mathcal H = \R^F$, then let $\bset{\phi_r^\mathbf A}_{r=1}^F$ denote orthonormalized eigenvectors of the matrix $\mathcal C_{v^\mathbf A}$. Then $V_{M_\bA} =  \begin{bmatrix}
    \phi_{M_\mathbf A + 1}^\mathbf A & \dots & \phi_{F}^\mathbf A
\end{bmatrix}$, and we set the vectors $\bset{t_m}_{m=1}^{M_\text{res}}$ as the unit eigenvectors corresponding to the $M_\text{res}$ smallest eigenvalues of the matrix $V_{M_\bA}^\top \mathcal C_{v^\mathbf B} V_{M_\bA}$. Note that $t_m \in \R^{F - M_\mathbf A}$. Then we set $s_m = V_{M_\bA} t_m$ and ${\CHres} = \Span{s_m}_{m=1}^{M_\text{res}}$. This choice of $\CHres$ constitutes the ACA-S method of FINDER.

Lemma \ref{expectation and BLOs commute} offers a simple generalization of a property of random vectors: namely if $v \in L^2(\Omega, \R^F)$ is a random vector and $K \in \R^{q \times F}$ is an arbitrary matrix then the covariance matrix of $Kv$ is simply $K \mathcal C_{v} K^\top$. From Lemma \eqref{expectation and BLOs commute}, it follows that upon projection onto $\CHres$, the covariance operators update as $\mathcal C_{v^\mathbf C} \to P_{\CHres} \mathcal C_{v^\mathbf C} P_{\CHres}^*$. In computation, the operator $ \D Q_\text{res} := \sum_{m=1}^{M_\text{res}} \mathbf e_m \otimes s_m$ is used in place of $P_{\CHres}$, where $\mathbf e_m$ is the $m$th standard basis vector of $\R^{M_\text{res}}$

\subsubsection{ACA-L} \label{subsub ACA-L}

The tendency of the projected $v_i$ to concentrate in different regions of $\mathcal H$ makes an SVM with RBF kernel a particularly attractive choice of machine, though we test a linear kernel as well. However, if the within-class spread of the two classes is large, relative to the distance between the class means, then the above projection $Q_{\text{res}}$ may fail to reliably separate the two classes. If it is the case that the two class means are particularly close, it may be more desirable to 
 choose a subspace $\CHres$ with the property that $Q_{\text{res}}v^\mathbf A$ concentrates about the origin with low spread and $Q_{\text{res}}v^\mathbf B$ also concentrates about the origin but with large spread, as in Figure \ref{Classification_via_features}. Since concentration bound \eqref{A concentration bound 2} holds as long as $\CHres \leq \CH_\bA^\perp$, one can instead set the $t_m$ to be the eigenvectors of $V_{M_\mathbf A}^* \mathcal C_{v^\mathbf B}V_{M_\mathbf A}$ corresponding to the \textit{largest} eigenvalues of this operator, thereby \textit{maximizing} $\|P_{\mathcal S}(v^\mathbf B - \E(v^\mathbf B))\|^2_{L^2(\Omega, \CH)}$ over all possible subspaces of dimension $M_\text{res}$. This choice of $\CHres$ constitutes the ACA-L method of FINDER. Furthermore, we employ several, equally spaced values of $M_\text{res}$ in experimentation.

\subsection{Complexity analysis for feature construction}\label{complexity_analysis}

Suppose that class $\bA \in \R^{F \times N_{\bA}}$ consists of $F$ features and $N_{\bA}$ samples. Similarly $\bB \in \R^{F  \times N_{\bB}}$ with $F$ features and $N_{\bB}$ samples.  Suppose we have $M_{\bA}$ principal components truncation parameters of Class $\bA$.
There are three basic modules for computing the training data: (i) \emph{KL module} that computes the covariance matrix. (ii) \emph{Construction module} that constructs the basis for the residual subspace. This module varies according to the type of residual subspace of the basis that is constructed. (iii) \emph{Feature module} that computes the projection on the residual subspace and thus the novel features.

Direct calculations estimate the nominal computational complexity of each FINDER variant as:

\textbf{Direct residual subspace:}
\begin{inparaenum}[(i)]
    \item  \emph{KL and Construction modules:} If $N_{\bA} < F$ then $\mcO(N^{2}_{\bA} {M_{\bA}}) + \mcO(FN^{2}_{\bA} {M_{\bA}})$ else $\mcO(F^{2} {M_{\bA}})$.
    \item  \emph{Feature module:} $\mcO(FN_{\bB} {M_{\bA}})$
\end{inparaenum}

\textbf{MLS:}
\begin{inparaenum}[(i)]
    \item  \emph{KL module:} If $N_{\bA} < F$ then $\mcO(N^{2}_{\bA} {M_{\bA}}) + \mcO(FN^{2}_{\bA} {M_{\bA}})$ else $\mcO(F^{2} {M_{\bA}})$.
    \item \emph{Construction module:} $\approx \mcO(F \log{F})$
    \item  \emph{Feature module:} $\approx \mcO((N_{\bA} + N_{\bB}) F \log{F)})$
\end{inparaenum}

\textbf{ACA:}
\begin{inparaenum}[(i)]
    \item  \emph{KL \& Construction modules:} $\mcO(F \log F)$.
    \item  \emph{Feature module:} $\mcO((F-M_{\bA}) F (N_{\bA} + N_{\bB}))$.
\end{inparaenum}

\section{Testing}\label{TESTING_ALGOS}

\subsection{Estimates} \label{Estimates}
For LPOCV, the datasets $\bset{v^\mathbf A_i}_{i=1}^{N_\mathbf A}$, $\bset{v^\mathbf B_i}_{i=1}^{N_\mathbf B}$ are both divided into training sets $\T{A}{Train}$, $\T{B}{Train}$ and testing sets $\T{A}{Test}$, $\T{B}{Test}$. Further, the set $\T{A}{Train}$ is further divided into two, not necessarily disjoint, subsets $\T{A}{Cov}$ and $\T{A}{SVM}$, used to estimate $\mathcal C_{v^\mathbf A}$ and the separating hypersurface respectively.  
We consider the performance of FINDER for balanced and unbalanced training sub-cohorts for the SVM estimation. In the Balanced regimes, $\T{A}{SVM}$ contains the first $N_\bB - 1$ samples of $\T{A}{Train}$ and $\T{A}{Cov}$ contains the remaining $N_\bA - N_\bB - 1$ samples of $\T{A}{Train}$. In the Unbalanced regimes, $\T{A}{SVM} = \T{A}{Cov} = \T{A}{Train}$. Figure \ref{Data_Balancing_Figure} illustrates the procedure for partitioning the data into training and testing subsets.  

\begin{figure}[h!]
    \centering
    \includegraphics[width=\linewidth]{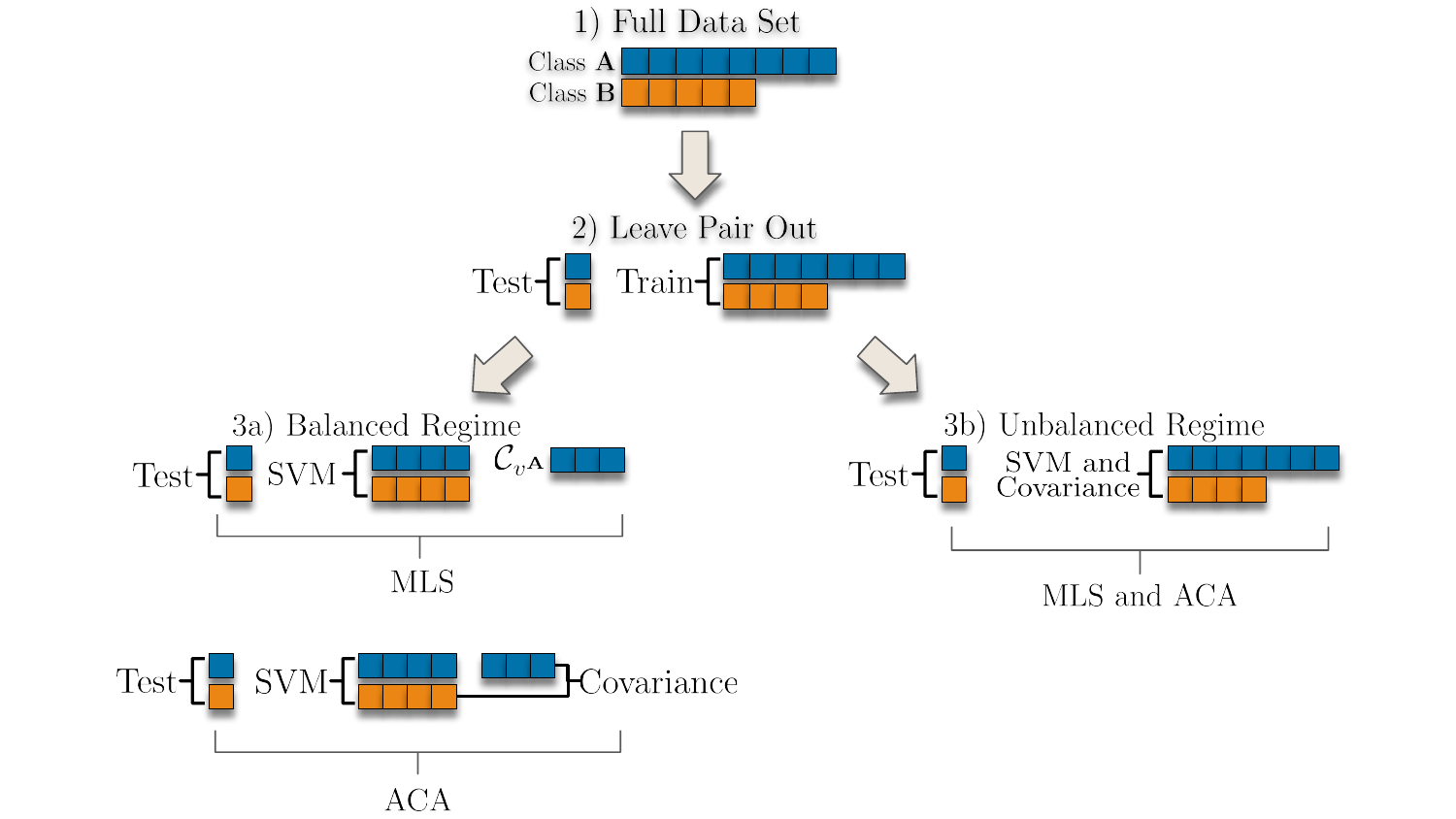}
    \caption{The division of the dataset $\mathcal D$ into training (which includes SVM separating hypersurface and covariance operator estimation) and validation (testing) subsets for one round of LPOCV.}
    \label{Data_Balancing_Figure}
\end{figure}

In either regime, the aggregated dataset $\T{A}{SVM} \cup \T{B}{Train}$ is used to estimate the SVM. 

We assume that all observations have had the empirical Class $\mathbf A$ expectation (computed as $\D \frac{1}{|\T{A}{Cov}|} \sum_{v^\mathbf A_i \in \T{A}{Cov}} v_i^\mathbf A$) subtracted. 

For both FINDER methods, the covariance operator of $v^\mathbf A$ is estimated by

$$\widehat {\mathcal C}_{v^\mathbf A} := \D \frac{1}{|\T{A}{Cov}|-1} \sum_{v_i^\mathbf A \in \T{A}{Cov}}v_i^\mathbf A  \otimes v_i^\mathbf A$$

The eigen-pair estimates $\bset{\widehat{\lambda_i^\mathbf A}, \widehat \phi_i^\mathbf A}_{i=1}^{|\T{A}{Cov}|}$ are computed as the eigen-pairs of $\widehat{\mathcal C}_{v^\mathbf A}$. 

For ACA only, $\widehat{\mathcal C}_{v^\mathbf B}$ is calculated as 

$$\widehat{\mathcal C}_{v^\mathbf B} = \dfrac{1}{|\T{B}{Train}| -1} \sum_{v_i^\mathbf B \in \T{B}{Train}} (v_i^\mathbf B - \overline{v_i^\mathbf B} ) \otimes  (v_i^\mathbf B - \overline{v_i^\mathbf B} )  $$

where $\D \overline{v_i^\mathbf B} =  \dfrac{1}{|\T{B}{Train}|} \sum_{v_i^\mathbf B \in \T{B}{Train}} v_i^\mathbf B$. Note that unlike $\T{A}{Train}$, the entirety of $\T{B}{Train}$ in both the Balanced and Unbalanced regimes within ACA, is used to compute $\widehat{\mathcal  C}_{v^\bB}$ and the SVM. 

For the chosen value of truncation parameter $M_\mathbf A < |\T{A}{Cov}|$, the isometry $V_{M_\bA}$ as described in section \ref{Class Concentration Bounds} is estimated by the operator satisfying

$$\widehat V_{M_\mathbf A}f := \sum_{i \in I} f(i) \widehat \psi_i $$

where $\bset{\widehat \psi_i}_{i \in I}$ is an orthonormal basis for $\left( \Span{\widehat \phi_i^\mathbf A}_{i=1}^{M_\bA} \right)^\perp$ created by adapting the algorithm detailed in \cite{Tausch2003} as in MLS, or by a full SVD if $F$ is small enough that a full SVD is faster. 

The eigenvectors $\bset{\widehat t_m}_{m=1}^{M_\text{res}}$ corresponding to the smallest (largest if ACA-L) $M_\text{res}$ eigenvalues of the operator $\widehat V_{M_\mathbf A}^* \widehat{\mathcal C}_{v^\mathbf B} \widehat V_{M_\mathbf A}$ are computed. Finally, the training data is updated as 

$$v_i^\mathbf A \to \sum_{m=1}^{M_\text{res}} \aset{v_i^\mathbf A, \widehat V_{M_\mathbf A} \widehat t_m}_\CH \mathbf e_m, \qquad v_i^\mathbf B \to \sum_{m=1}^{M_\text{res}} \aset{v_i^\mathbf B, \widehat V_{M_\mathbf A} \widehat t_m}_\CH \mathbf e_m$$

With $\bset{\mathbf e_m}_{m=1}^{M_\text{res}}$ the standard basis of $\R^{M_\text{res}}$.

\begin{remark}
The leave-one-out cross validation approach is applied to both classes. If we have two classes $\bA$ and $\bB$ with number of samples $N_{\bA}$ and $N_{\bB}$, one sample is removed from each class as validation and the rest as training. All possible combinations are removed for each class. This leads to a total of $N_{\bA}N_{\bB}$ training-validation tests.
\end{remark}

\begin{remark}
Note that a key aspect of efficiently using FINDER is picking the smallest $M_\bA, \Mres$ possible without sacrificing substantial performance. The formal structure and optimization characteristics defined for FINDER in Sec. \ref{mathKL} and developed in App. \ref{KLE_full_proof} - \ref{TESTING_ALGOS} fall within the Manyfold Learning prescription developed in \cite{dog25_0, dog20, dog20_npbe, dogred20} (see \cite[Ch. 3.4 and App. E]{dog25_0} for more details). Thus, reduced order implementations of FINDER may be possible based on the principles developed in \cite{reddog22}. 
\end{remark}

\section{Experimental Details}\label{experiment_appendix}
We conduct a set of experiments spanning various kinds of inherently noisy datasets, sourcing data from bio-medical and geo-sensing contexts. For each setting, we pick datasets with high and low noise contents respectively, so we can compare performances of conventional vs our methods and how they each scale with an increase in noise. We give a brief description of each dataset below:

\subsection{AD using blood proteins}

To assess the performance of the FINDER framework, we employed proteomic data from the Alzheimer’s Disease Neuroimaging Initiative (ADNI), a longitudinal, multi-center observational study initiated in 2003 to facilitate the discovery and validation of biomarkers for Alzheimer’s disease progression. The ADNI dataset comprises multiple study phases—namely ADNI1, ADNIGO, ADNI2, ADNI3, and ADNI4—with clinical and molecular data collected across time points \cite{Petersen2010}.

Our analysis focused on the ADNI1 cohort, consisting of 209 subjects clinically diagnosed with Alzheimer’s disease (AD), 742 with Late Mild Cognitive Impairment (LMCI), and 112 cognitively normal (CN) controls. Peripheral blood samples were collected at baseline (BL) and at the 12-month follow-up (M12). The experimental evaluation in this section is carried out using the plasma proteomics subset M12, comprising quantitative measurements for 146 protein biomarkers. This final cohort distribution includes 54 CN, 96 AD, and 346 LMCI participants. Note that the LMCI paritipants are actually a combination of LMCI and MCI.


\begin{table} 
 \centering 
\begin{tabular}{cccccc} 
\hline 

\rowcolor{olive!40}
\multicolumn{6}{c}{AUC} \\
\hline 

\rowcolor{gray!40} 
Regime  & SVM Linear & SVM w/ RBF & Logit Boost & BAG & RUS Boost \\ 
 \hline \hline 
 
\rowcolor{blue!20} 
AD vs. CN & 0.754 & 0.561 & \textbf{0.798} & 0.743 & 0.763\\ 
Time (ms) & 127.66 &\textbf{ 102.79} & 1173.42 & 1335.89 & 599.03\\ 

\rowcolor{blue!20} 
AD vs. LMCI & 0.768 & 0.590 & \textbf{0.790} & 0.781 & 0.768\\ 
Time (ms) & 356.61 & \textbf{246.40} & 2302.33 & 2092.64 & 897.66\\ 

\rowcolor{blue!20} 
CN vs. LMCI &\textbf{ 0.910} & 0.637 & 0.814 & 0.778 & 0.776\\ 
Time (ms) & 246.69 & \textbf{179.36} & 2075.48 & 1339.35 & 796.28\\

\hline \hline 


\rowcolor{olive!40}
\multicolumn{6}{c}{Accuracy} \\
\hline 

\rowcolor{gray!40} 
 Regime  & SVM Linear & SVM w/ RBF & Logit Boost & BAG & RUS Boost \\ 
 \hline \hline 
 
\rowcolor{blue!20} 
AD vs. CN & 0.644 & 0.486 & \textbf{0.684} & 0.684 & 0.657\\ 
Time (ms) & 127.66 & \textbf{102.79} & 1173.42 & 1335.89 & 599.03\\ 

\rowcolor{blue!20} 
AD vs. LMCI & 0.570 & 0.497 & 0.667 & \textbf{0.735} & 0.581\\ 
Time (ms) & 356.61 & \textbf{246.40} & 2302.33 & 2092.64 & 897.66\\ 

\rowcolor{blue!20} 
CN vs. LMCI & 0.717 & 0.497 & 0.586 & \textbf{0.721} & 0.513\\ 
Time (ms) & 246.69 & \textbf{179.36} & 2075.48 & 1339.35 & 796.28\\ 

\hline \hline 
\end{tabular} 
\caption{Benchmark machine performance for ADNI blood protein data (raw data features), along with elapsed time to complete one round of LPOCV. Maximum AUC and accuracy and minimum run time are emboldened in each row.} 
\label{AUC for BMP} 
\end{table}

 \begin{table} 
\begin{tabular}{c >{\centering\arraybackslash}p{1.15cm} 
>{\centering\arraybackslash}p{1.15cm} 
>{\centering\arraybackslash}p{1.15cm} 
>{\centering\arraybackslash}p{1.15cm} 
>{\centering\arraybackslash}p{1.15cm} 
>{\centering\arraybackslash}p{1.15cm} 
>{\centering\arraybackslash}p{1.45cm} 
}\hline 
\rowcolor{olive!40} 
\multicolumn{8}{|c|}{Balanced} \\ 
\hline 

\rowcolor{gray!40} 
 Regime  & MLS & MLS & ACA-S & ACA-S & ACA-L & ACA-L & Best \\
 \rowcolor{gray!40} 
 SVM & Linear & RBF & Linear & RBF & Linear & RBF & Benchmark \\ 
 \hline \hline 
 
\rowcolor{blue!20} 
AD vs. CN & 0.722 & \textbf{0.819} & 0.752 & 0.783 & 0.710 & 0.783 & 0.684\\ 
Time (ms) & 116.1 & 130.9 & \textbf{68.8} & 86.7 & 74.0 & 78.4 & 1173.42\\ 

\rowcolor{blue!20} 
AD vs. LMCI & 0.561 & 0.736 & \textbf{0.788} & 0.767 & 0.525 & 0.775 & 0.735\\ 
Time (ms) & 212.1 & 287.0 & \textbf{81.8} & 109.4 & 84.9 & 100.1 & 2092.64\\ 

\rowcolor{blue!20} 
CN vs. LMCI & 0.783 & 0.786 & \textbf{0.918} & 0.907 & 0.748 & 0.835 & 0.721\\ 
Time (ms) & 164.7 & 231.3 & \textbf{74.8} & 92.5 & 79.7 & 89.1 & 1339.35\\ 

\hline \hline 
\rowcolor{olive!40} 
\multicolumn{8}{|c|}{Unbalanced} \\ 
\hline 

\rowcolor{gray!40} 
Regime  & MLS & MLS & ACA-S & ACA-S & ACA-L & ACA-L & Best\\
 \rowcolor{gray!40} 
 SVM & Linear & RBF & Linear & RBF & Linear & RBF & Benchmark \\ 
 \hline \hline 
 
\rowcolor{blue!20} 
AD vs. CN & 0.788 & \textbf{0.857} & 0.796 & 0.855 & 0.790 & 0.857 & 0.684\\ 
Time (ms) & 112.1 & 129.5 & \textbf{76.2} & 93.0 & 77.0 & 85.6 & 1173.42\\ 

\rowcolor{blue!20} 
AD vs. LMCI & 0.559 & 0.735 & 0.784 & \textbf{0.778} & 0.558 & 0.734 & 0.735\\ 
Time (ms) & 215.2 & 291.1 & \textbf{115.3 }& 179.1 & 121.8 & 178.1 & 2092.64\\ 

\rowcolor{blue!20} 
CN vs. LMCI & 0.761 & 0.791 & \textbf{0.897} & 0.841 & 0.715 & 0.790 & 0.721\\ 
Time (ms) & 164.9 & 239.9 & \textbf{91.4} & 135.9 & 94.6 & 146.2 & 1339.35\\ 

\hline \hline 
\end{tabular} 
\caption{Maximum accuracy Achieved ADNI data sets along with elapsed time to complete one round of LPOCV. We observe that the benchmark accuracy is ameliorated across the three cohorts within the ACA-S regime (although the MLS w/ RBF regime offers the optimum accuracy in the AD vs. CN cohort, regardless of balancing). Additionally, each of the FINDER regimes offer a significant reduction in run time compared to benchmark.} 
\label{accuracy for FD} 
\end{table}



\begin{figure}
    \centering
    \includegraphics[width=0.95\linewidth]{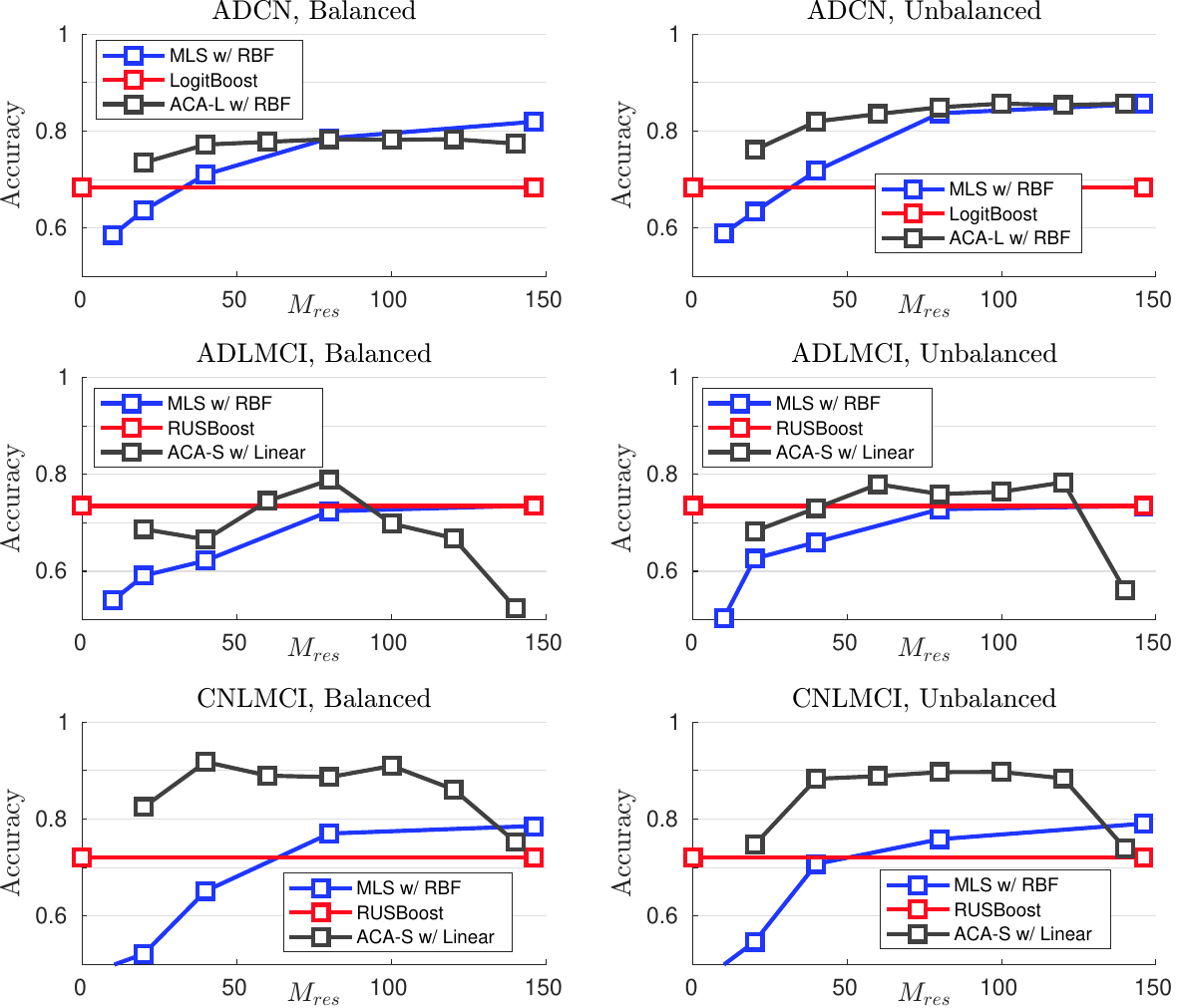}
    \caption{Accuracy obtained across all three methods for each of the three ADNI cohorts. Within each method (benchmark, MLS, ACA), we display the regime which obtains the maximum accuracy among the values of $M_\text{res}$ tested. We observe that the ACA-S regime either outperforms or matches both the MLS method and the benchmark learners across all three cohorts. However, the performance of the ACA-S regime can be sensitive to the choice of $M_\text{res}$, as exemplified in the AD vs. LMCI and CN vs. LMCI cohorts. Values of $M_\text{res}$ which are too high or too low can prevent the ACA-S regime from achieving maximum accuracy, and can even reduce the accuracy below benchmark level, as exhibited by the AD vs. LMCI cohort.
    The data demonstrate a significant improvement on the benchmark performance, prompting an ad-hoc analysis of the ratio of the FINDER and benchmark error rates as tabulated in Figure \ref{ADNI accuracy refinement}. }
    \label{ADNI_Accuracy_Data}
\end{figure}

\begin{figure}
    \centering
    \includegraphics[width=0.95\linewidth]{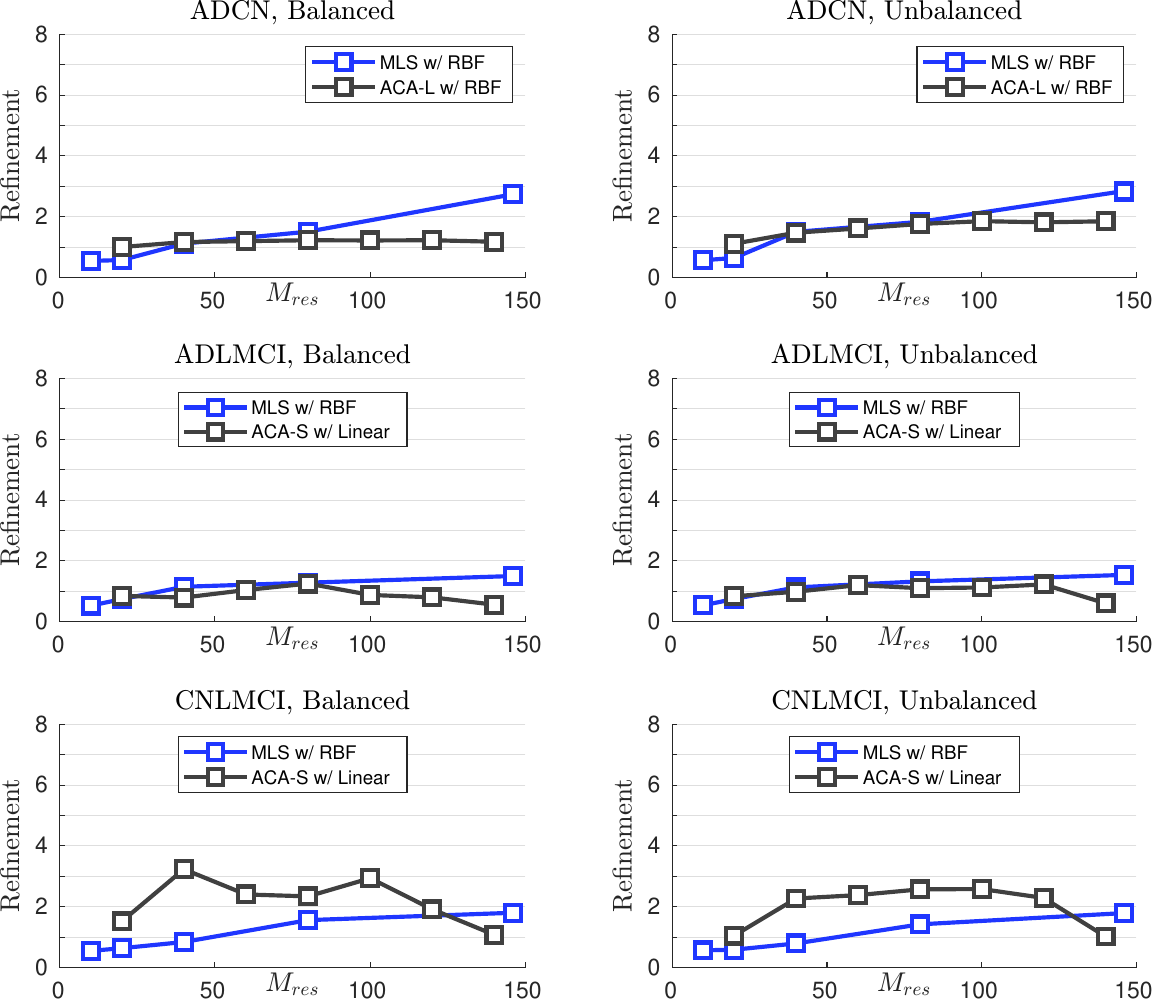}
    \caption{ADNI accuracy refinement with respect to best performing baseline, computed as $\dfrac{1-\text{accuracy}_{\text{FINDER}}}{1-\text{accuracy}_{\text{Benchmark}}}$. In both the Balanced and Unbalanced regimes, both the ACA and MLS methods offer a two fold (or near two fold) reduction in error across all three ADNI cohorts. In particular, the MLS method obtains an almost three fold error reduction in the AD vs. CN cohort, and the ACA method obtains an almost four fold error reduction in the CN vs. LMCI cohort.}
    \label{ADNI accuracy refinement}
\end{figure}

\newpage
\subsection{Remote Sensing (Deforestation Detection)}
\label{remote_appendix}
The radar and optical data are provided by the Copernicus Sentinel data [2018-2022].  From the optical Sentinel-2 the EVI data observations are split between training and validation.  The training data consists of 71 optical Sentinel-2 EVI measurements between December 17, 2018 and March 21, 2020.  Sentinel-2 covers the Earth every 5 days. However, some days contain heavy clouds and are thus removed from the dataset. The validation dataset consists of 161 days between March 26, 2020 to December 31, 2022.  The SAR Sentinel-1 data consist of 234 samples between January 4th 2017  and December 28, 2022. In Figure \ref{PR:Fig7}, the detection capabilities of the FINDER approach with both optical and radar data are shown. This corresponds to a small region from the full testing region. Notice the regions where the forest is removed are detected with high accuracy. In particular, the road created by removing the trees in the yellow box is detected.

\begin{table*}[ht]
    \caption{Accuracy using optical anomaly data (HMM + FINDER) vs unprocessed optical data (HMM)}
    \label{results:utable}
    \centering
        \begin{tabular}{>{\centering\arraybackslash}p{5cm} >{\centering\arraybackslash}p{1cm} 
    >{\arraybackslash}p{1cm} 
    >{\arraybackslash}p{1cm} >{\arraybackslash}p{1cm}  >{\centering\arraybackslash}p{2cm}  }
    \toprule
\rowcolor{olive!40}  Algorithm (Data) & Training Days & Overall Acc. & User Acc.  & Producer Acc. & Computational Time (h) \\
\midrule
\rowcolor{blue!20} HMM + FINDER (Optical) & 71  & 0.931 & 0.823 & 0.718 &  13.95 \\
HMM + FINDER (Optical + Radar) & 71     & 0.942 & 0.878 & 0.745 & 49.47\\
\rowcolor{blue!20} HMM  (Raw Optical)& NA   & 0.9668 & 0.8565 & 0.9105 & 9.82\\
HMM (Raw Optical + Radar)& NA   & 0.9614 & 0.8391 & 0.889 & 45.34\\
    \end{tabular}
\end{table*}

\begin{figure}[ht]
\centering
  \begin{tikzpicture}[scale = 0.85, every node/.style={scale=0.85}]
   \begin{scope}
  \node at (-5.45,0) {\includegraphics[width = 5.4cm, height = 5.4cm]{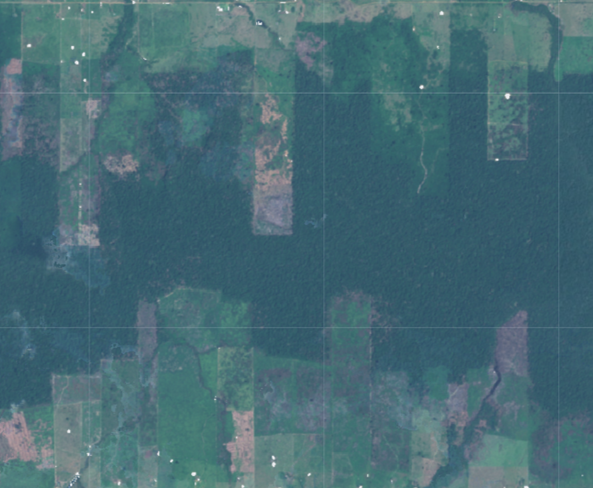}};
  \node at (0,0) {\includegraphics[width = 5.4cm, height = 5.4cm]{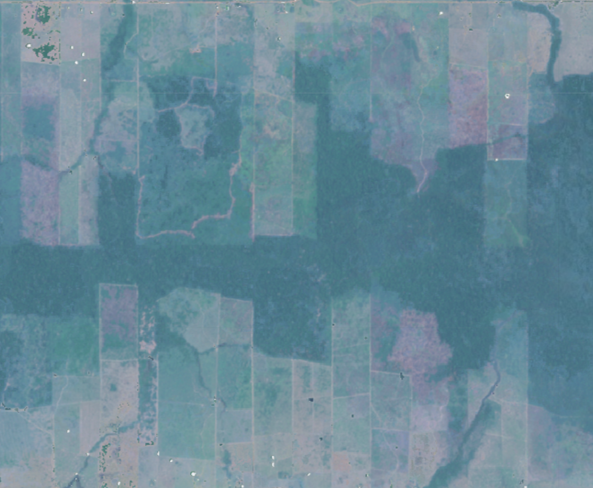}};
  \node at (5.45,0) {\includegraphics[width = 5.4cm, height = 5.4cm]{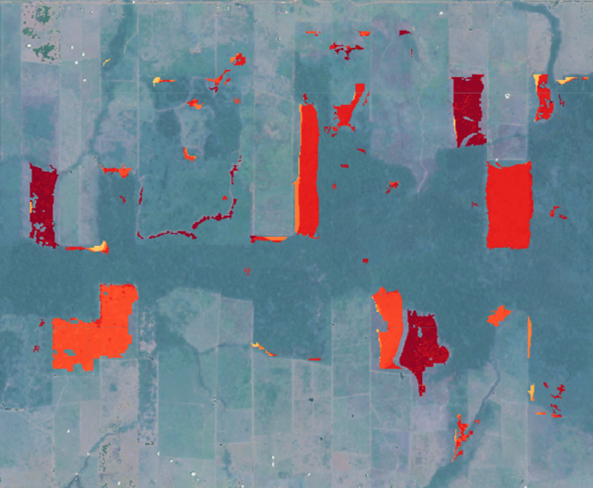}};
  \node at (4.2,-2.04) {\includegraphics[scale = 0.175]{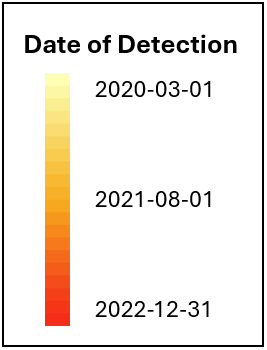}};
  \node at (-5.45,3.1) {\small Start (2020, March 26)};
  \node at (0,3.1) {\small End (2022, December 31)};
  \node at (5.45,3.1) {\small Deforestation Pattern};
    \draw[draw = black] (0.45,-1.75) rectangle ++(1.15,1.5);
    \draw[draw = black] (5.9,-1.75) rectangle ++(1.15,1.5);
    \draw[draw = black] (-5,-1.75) rectangle ++(1.15,1.5);
    \draw[draw = yellow] (-1.55,-0.2) rectangle ++(1.15,1.25);
    \draw[draw = yellow] (-7,-0.2) rectangle ++(1.15,1.25);
    \draw[draw = yellow] (3.9,-0.2) rectangle ++(1.15,1.25);
    \draw[draw = blue] (1.35,1) rectangle ++(0.45,0.93);
    \draw[draw = blue] (6.8,1) rectangle ++(0.45,0.93);
    \draw[draw = blue] (-4.1,1) rectangle ++(0.45,0.93);
  \node at (-5.45,-3.2) {(a)};
  \node at (0,-3.2) {(b)};
  \node at (5.45,-3.2) {(c)};
  \end{scope}
  \end{tikzpicture}
    \caption{Multi-sensor hybrid optical and SAR tracking of deforestation during a cloudy period examined using the HMM model (March 26, 2020 to December 31, 2022). (a) Due to the cloudy period, a composite image for the start is shown. (b) Many forest regions have been removed by the end date. (c) The color map shows the tracking of deforestation using the HMM with the optical and SAR data. There are many deforestation activities that are caught within the HMM hybrid model.}
  \label{PR:Fig7}
 \end{figure}

A natural question for the HMM + FINDER method is whether the anomaly filter is a necessary part of the pipeline.  Table \ref{results:utable} compares the results from using the anomaly data against just using the raw optical data.  Based on these values it would seem that using the raw optical data is better, however a closer inspection of the results illustrates a different picture.  Figure \ref{Remote:nonF} shows that while there is already good class separation in the unprocessed optical data between dry forest and dry bare ground, since water and bare ground both have low EVI values we get that wet forest and bare ground are classified together.  This causes regions of false positives in marshy land, as can be seen in Figure \ref{Remote:nonF} in the top right.  Applying the KLE allows us to better separate these classes.  Note there are still some false positives in these regions when using the processed optical data due to the radar data, which is also affected by water.  While these regions of false positives are relatively small in this example, using the raw optical data could cause serious problems for regions experiencing heavy rainfall/flooding or with a large portion of marshy land - the anomaly data is still the superior choice.

\begin{figure}[ht]
\centering 
\begin{tikzpicture}[scale = 1, every node/.style={scale=1}]
\begin{scope}
\node at (-0.875,0){\includegraphics[width=0.48\linewidth]{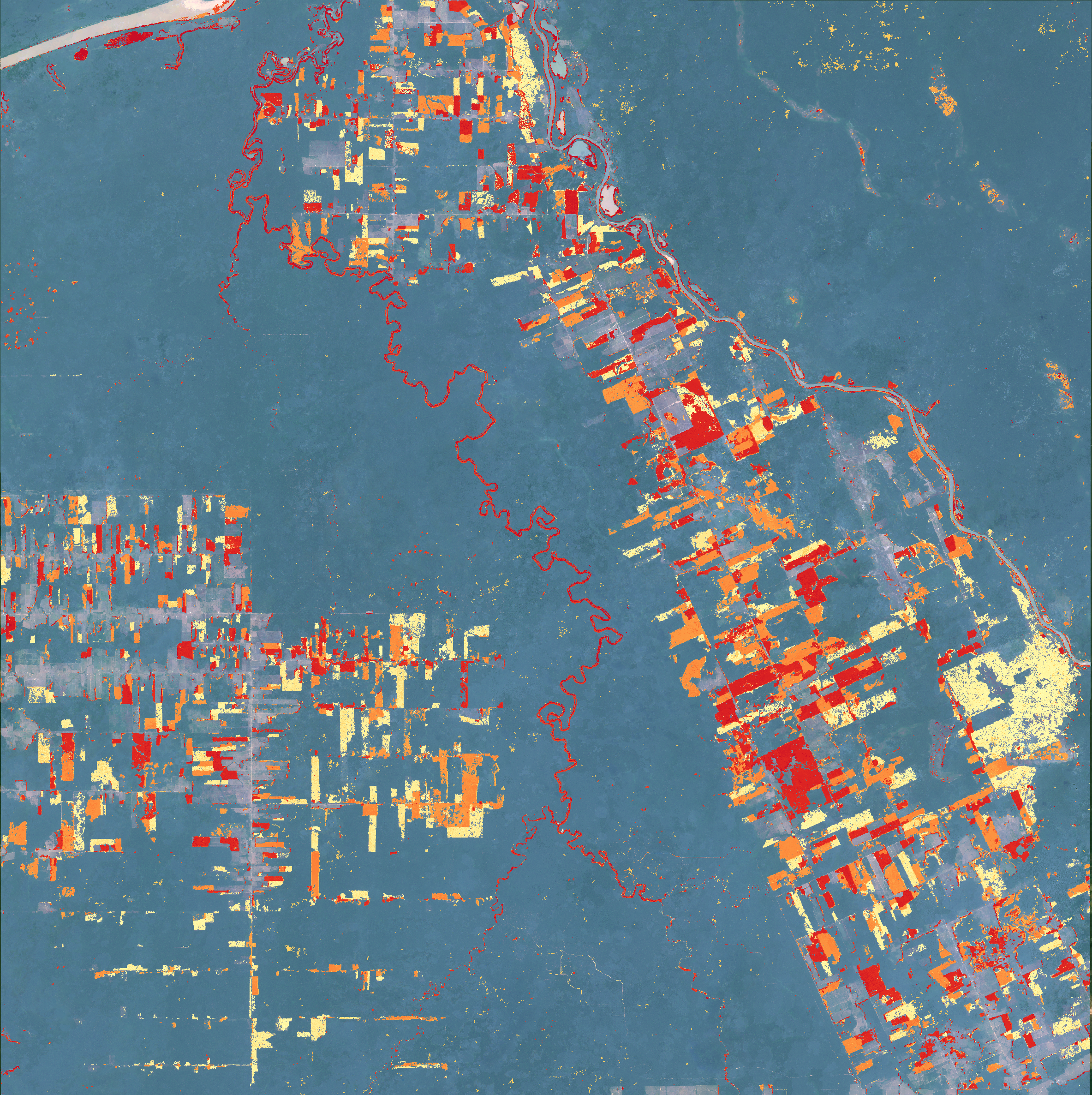}};
\node at (5.875,0) {\includegraphics[width=0.48\linewidth]{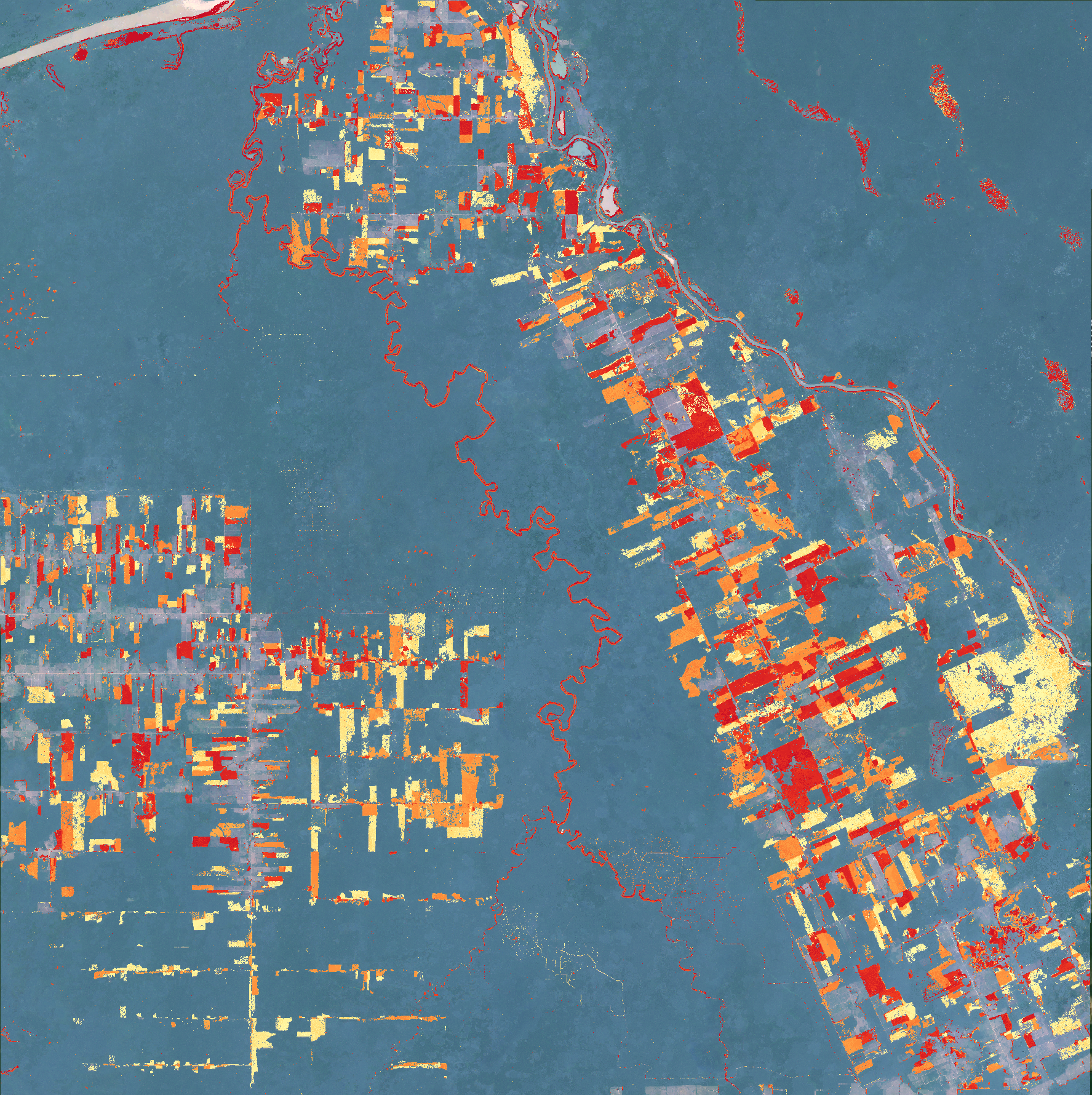}};
\draw[draw = black] (0.2,2.1) rectangle ++(1.1,1.25);
\draw[draw = black] (6.95,2.1) rectangle ++(1.1,1.25);
\draw[draw = blue] (1.425,2.3) rectangle ++(0.3,0.7);
\draw[draw = blue] (8.175,2.3) rectangle ++(0.3,0.7);
\draw[draw = blue] (1.42,2.) rectangle ++(0.22,0.22);
\draw[draw = blue] (8.17,2.) rectangle ++(0.22,0.22);
\draw[draw = blue] (1.75,2) rectangle ++(0.3,0.3);
\draw[draw = blue] (8.5,2) rectangle ++(0.3,0.3);
\draw[draw = blue] (2.2,1.775) rectangle ++(0.25,0.25);
\draw[draw = blue] (8.95,1.775) rectangle ++(0.25,0.25);
\draw[draw = blue] (2.125,0.8) rectangle ++(0.3,0.4);
\draw[draw = blue] (8.875,0.8) rectangle ++(0.3,0.4);
\draw[draw = blue] (0.55,1.3) rectangle ++(0.225,0.35);
\draw[draw = blue] (7.3,1.3) rectangle ++(0.225,0.35);

\node at (-0.75,-3.8) {(a)};
\node at (6,-3.8) {(b)};
\end{scope}
\end{tikzpicture}
\caption{(a) Hybrid datemap using anomaly values.  (b) Hybrid datemap using unprocessed optical data.  KLE results in better class separation between wet forest and deforested land, which both have low EVI values in the unprocessed optical data. This can be seen in the marshy land at the top right.  Almost all of the detections in the black rectangle are removed, and the blue detections are reduced.}
    \label{Remote:nonF}
\end{figure}

\newpage

\subsection{Alzheimer's Disease (AD) classification from gene expression data (newAD)}

The newAD dataset \cite{Dileep2025}, confidential and unavailable to the public at the moment, comprises the gene expression levels of 2053 enumerated genes from a cohort of 184 patients not afflicted with AD (Class $\mathbf A$, Normal) and 145 patients afflicted with AD (Class $\mathbf B$, AD). We put $U = \CH = \CH_M = \R^{2053}$ with $M_\mathbf A = 8$. Both the ACA and MLS methods are capable of elevating the AUC of binary prediction relative to the highest performing benchmark learner (SVM Linear, Table \ref{newAD Table}) at the cost of a longer running time. At the same time, the ACA and MLS approaches also elevate the accuracy of binary prediction and reduce the run time compared to RUSBoost.


\begin{table}[tbhp]
\centering
\begin{tabular}{|c|c|c|c|}
\hline
\rowcolor{olive!40}
\hline
Method & Score & Time (s) & Regime \\
\hline \hline
\rowcolor{gray!40} \multicolumn{4}{|c|}{AUC} \\
\hline \hline
\rowcolor{blue!20}
Benchmark & 0.761 & \textbf{0.51} & SVM Linear \\
MLS & \textbf{0.786} & 1.57 & Balanced, RBF \\
\rowcolor{blue!20}
ACA-L & 0.785 & 1.21 & Balanced, RBF \\
\hline \hline
\rowcolor{gray!40} \multicolumn{4}{|c|}{Accuracy} \\
\hline \hline
\rowcolor{teal!30}
Benchmark & 0.697 & 6.84 & RUSBoost \\
MLS & 0.715 & 1.52 & Balanced, RBF \\
\rowcolor{teal!30}
ACA-L & \textbf{0.725} & \textbf{ 0.76 }&  Balanced, RBF \\
\hline
\end{tabular}
\caption{Maximum AUC and accuracy achieved for newAD data by benchmark learners and both FINDER methods. Both metrics are improved marginally by the usage of FINDER. The FINDER methods more than double the run time to generate a comparable AUC with respect to benchmark, but significantly reduce the run time to generate a comparable accuracy with respect to benchmark.}
\label{newAD Table}
\end{table}


\begin{figure}[ht]
\centering
   \begin{subfigure}[b]{0.95\textwidth}
   \centering \includegraphics[width=1\textwidth]{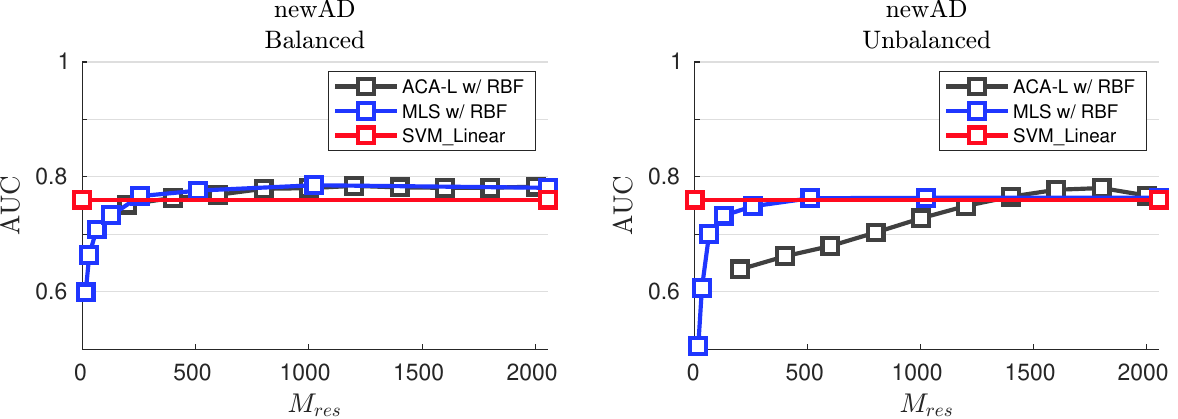}
   \caption{newAD AUC}
\end{subfigure}
\begin{subfigure}[b]{0.95\textwidth}
   \centering \includegraphics[width=1\textwidth]{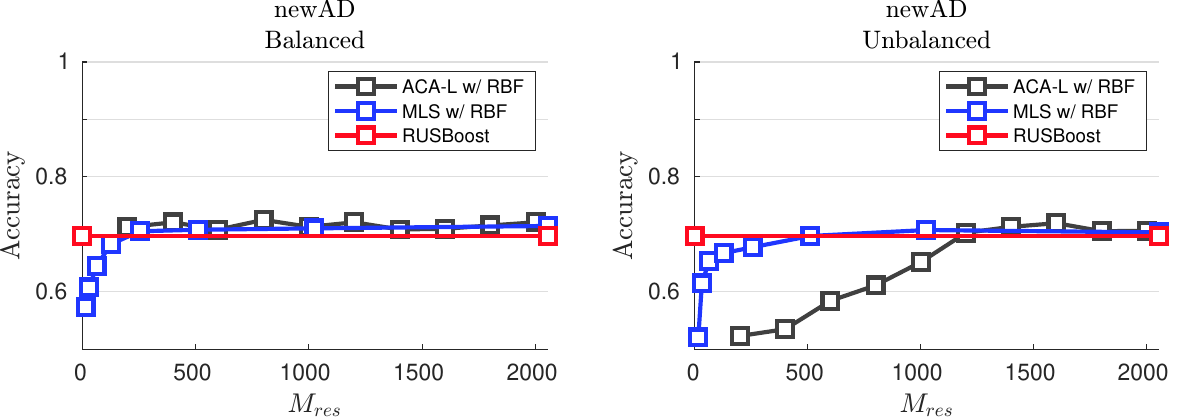}
   \caption{newAD Accuracy. }
   \label{newAD Graphs}
\end{subfigure}
\caption{The MLS method appears robust to pre-balancing versus not pre-balancing the data (if $M_\text{res}$ is large enough). In contrast, ACA is highly sensitive to this choice, performing more consistently if the data is pre-balanced, although improvement upon the benchmark level is attained in both regimes. All FINDER methods favor pre-balancing the data and the usage of an RBF separating boundary.}
\end{figure}

\subsection{Cancer classification via gene expression data}\label{GCM_Cancer}
The problem of cancer classification using gene expression datasets such as in \cite{Tan2005} is considered essentially solved: existing state of the art methods already perform at levels expected to be near optimal. Since it is a conventional example with an established corpus of results and analysis behind it, we will use it to sketch out how FINDER loses its relative advantages when features are aplenty or data is already well-behaved.

We consider the GCM gene expression dataset introduced by \cite{Ramaswamy2001}. The data used in that paper is described in \cite{Tan2005}. The dataset comprises gene expression profiles for $N_\bA = 190$ tumor samples (Class $\bA$) and $N_\bB = 90$ normal tissue samples (Class $\bB$), with each sample containing expression levels for $F = 16,063$ genes. The underlying domain $U$ is modeled as a one-dimensional interval $U := [0, F-1]$, and gene expression signals are interpreted as discrete functions on $U$, represented by shifts of a Haar function  $\chi$. We thus have $\CH = L^{2}(U)$, $\CH_M = \Span{\chi(x - k)}_{k=0}^{16,062}$, and ${\CHres} = \CH_{\bA}^{\perp}$. The computation will be in the space $\R^{16,063}$.

 Incorporating the multiscale features extracted by the FINDER method yields a marginal improvement in classification accuracy, as detailed in Table \ref{GCM Table}.



\begin{figure}[ht]
\centering
   \begin{subfigure}[b]{1\textwidth}
   \includegraphics[width=1\textwidth]{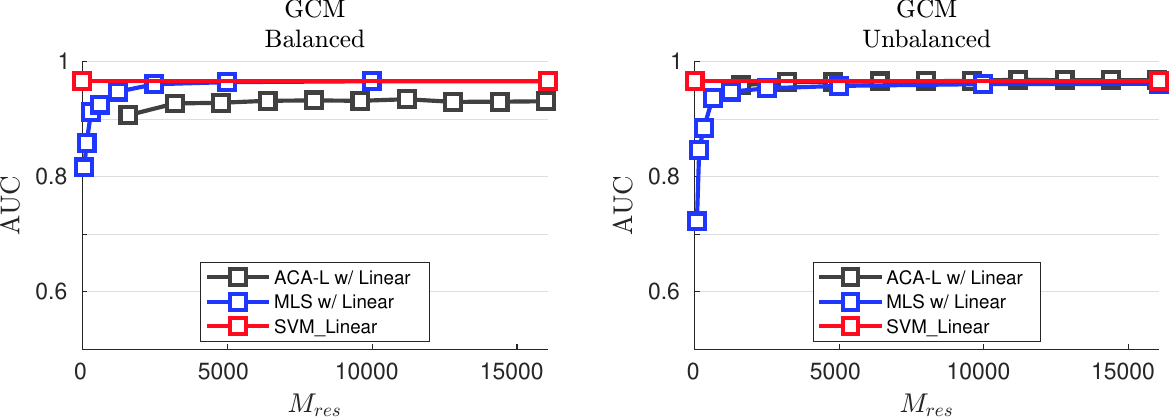}
   \caption{GCM AUC}
   \label{GCM AUC Graph} 
\end{subfigure}
\begin{subfigure}[b]{1\textwidth}
   \includegraphics[width=1\textwidth]{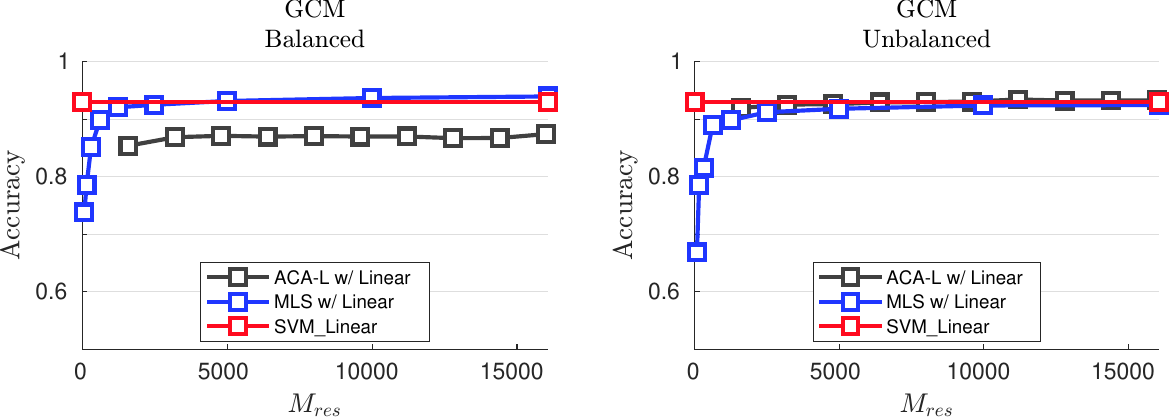}
   \caption{GCM Accuracy}
   \label{GCM accuracy Graph}
\end{subfigure}
\caption{Best performing regimes among the three categories of learners for GCM data. The MLS method appears more robust the Balance vs. Unbalanced regimes as compared to the ACA method, in that the change in AUC and accuracy is not greatly affected by the choice to pre-balance the data or not. If the data is not pre-balanced, the ACA method performs at the benchmark level across several different residual subspace dimensions $M_\text{res}$, suggesting that this dataset is amenable to dimension reduction via the ACA-L regime.}
\end{figure}

\begin{table}[ht]
\centering
\begin{tabular}{|c|c|c|c|}
\hline
\rowcolor{olive!40}
\hline
Method & Score & Time (s) & Regime \\
\hline \hline
\rowcolor{gray!40} \multicolumn{4}{|c|}{AUC} \\
\hline \hline
\rowcolor{blue!20}
Benchmark & 0.966 & \textbf{1.81} & SVM Linear \\
MLS & 0.966 & 5.49 & Balanced, Linear \\
\rowcolor{blue!20}
ACA-L & \textbf{0.969} & 12.37 & Unbalanced, Linear \\
\hline \hline
\rowcolor{gray!40} \multicolumn{4}{|c|}{Accuracy} \\
\hline \hline
\rowcolor{teal!30}
Benchmark & 0.931 & \textbf{1.83} & SVM Linear \\
MLS & \textbf{0.940} & 5.41 & Balanced, Linear \\
\rowcolor{teal!30}
ACA-L & 0.934 & 12.34 &  Unbalanced, Linear \\
\hline
\end{tabular}
\caption{Maximum AUC and accuracy achieved for GCM data by benchmark learners and both FINDER methods. The GCM data exhibit an inherent geometric separation such that an SVM with linear separating boundary can reliably segregate Class $\bA$ from $\bB$. As such  FINDER methods are capable of achieving a similar AUC and accuracy, but only offer a marginal improvement on both metrics at the cost of a significantly longer running time. }
\label{GCM Table}
\end{table}

\subsection{AD classification from CSF data}

The third and final AD-related dataset is the SOMAscan dataset, comprising a list of 7008 proteins obtained from the cerebrospinal fluid (CSF) of 167 AD patients (Class $\bA$) and 138 CN patients (Class $\bB$). The raw data was imputed using a 5-nearest neighbors regression.  We put $U = \CH = \CH_M = \R^{7008}$ with $M_\mathbf A = 8$. While the two FINDER methods fail to outperform LogitBoost on this dataset (see Table \ref{SOMAscan7k KNNimputed AD CN Table}), they do provide an advantage vis a vis cost efficiency. Note that as expected, FINDER does augment results achieved by the conventional SVM methods, allowing them to reach AUC and accuracy levels that are closer to LogitBoost in less than half the time. 

The AUC obtained by LogitBoost exceeds 0.95. For datasets with this level of separability, any noise typically comes from the collection and manual classification of the data itself, rather than any inherent variability in the underlying features. As such, the separability of this particular dataset may not be amenable to improvement by our FINDER methods. This is in contrast to datasets, such as the ADNI data cohorts, in which the baseline AUC/accuracy is well below the desirable 0.90 level. 


\begin{table}[tbhp]
\centering
\begin{tabular}{|c|c|c|c|}
\hline
\rowcolor{olive!40}
\hline
Method & Score & Time (s) & Regime \\
\hline \hline
\rowcolor{gray!40} \multicolumn{4}{|c|}{AUC} \\
\hline \hline
 \rowcolor{blue!20}
 &  \textbf{0.957} & 11.75 & LogitBoost \\
 \rowcolor{blue!20}
Benchmark &  0.922     & 2.122     & SVM w/ Linear \\
\rowcolor{blue!20}
  & 0.899     &  \textbf{1.927}   & SVM w/ RBF \\
  
MLS & 0.930 & 4.51 & Balanced, RBF \\
\rowcolor{blue!20}
ACA-S & 0.935 & 4.46 &  Balanced, Linear \\
\hline \hline
\rowcolor{gray!40} \multicolumn{4}{|c|}{Accuracy} \\
\hline \hline

 \rowcolor{teal!30}
 & \textbf{0.894} & 11.75 & LogitBoost \\
\rowcolor{teal!30}
Benchmark & 0.847 & 2.122 & SVM w/ Linear \\
\rowcolor{teal!30}
 & 0.819 &\textbf{ 1.927} & SVM w/ RBF \\

MLS & 0.862 & 3.34 & Unbalanced, RBF \\
\rowcolor{teal!30}
ACA-S & 0.878 & 5.16 & Balanced, RBF \\
\hline
\end{tabular}
\caption{Maximum AUC and accuracy achieved for CSF data by benchmark learners and both FINDER methods. The FINDER methods more than halve the run time of Logitboost at the cost of a marginally lower AUC and accuracy. At the same time, the FINDER methods elevate the AUC and accuracy obtained by the SVM (with linear and RBF separating hypersurface) with the caveat of a longer run time.}
\label{SOMAscan7k KNNimputed AD CN Table}
\end{table}


\begin{figure}[hbpt]
\centering
   \begin{subfigure}[b]{1\textwidth}
   \centering \includegraphics[width=1\textwidth]{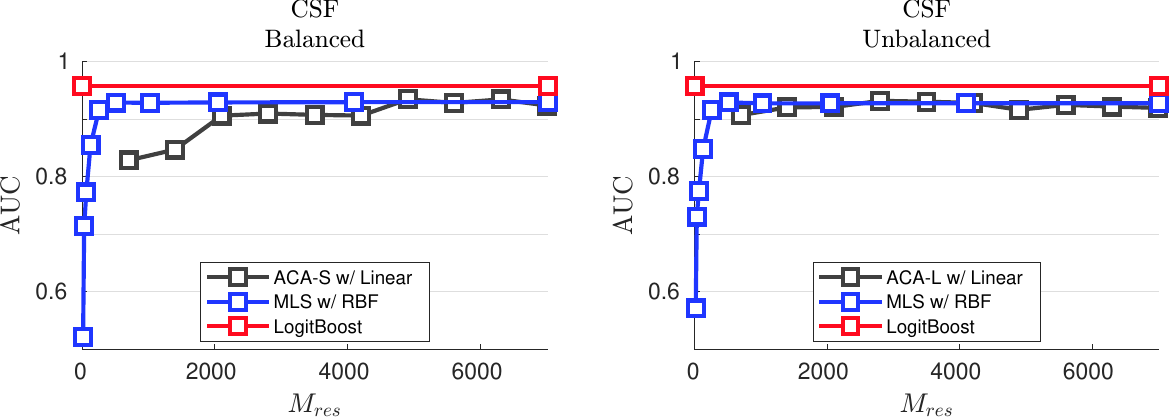}
   \caption{CSF AUC}
   \label{CSF AUC} 
\end{subfigure}
\begin{subfigure}[b]{1\textwidth}
   \centering \includegraphics[width=1\textwidth]{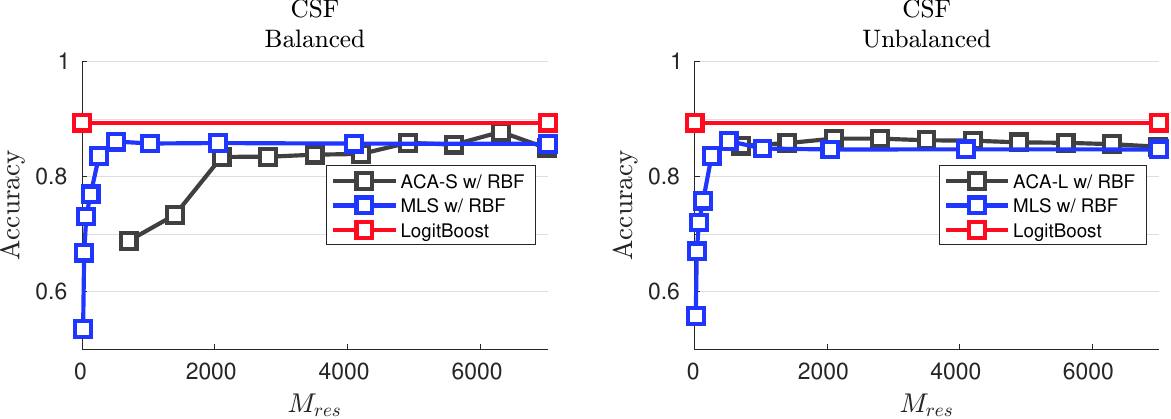}
   \caption{CSF Accuracy}
   \label{CSF Accuracy}
\end{subfigure}
\caption{Both FINDER methods achieve a lower accuracy and AUC compared to the best benchmark (LogitBoost). The performance of MLS remains robust to pre-balancing versus not pre-balancing the data, while the ACA method is more sensitive to this choice. Within the Unbalanced regime, the ACA achieves a more consistent AUC and accuracy across different values of $M_\text{res}$, suggesting that this method is better suited to dimension reduction. However, the highest AUC and accuracy is achieved within the ACA-S, Balanced regime}
\end{figure}

\end{document}